\pdfoutput=1
\documentclass[11pt]{article}

\usepackage[dvipsnames]{xcolor}
\definecolor{darkblue}{rgb}{0, 0, 0.5}

\usepackage[acceptedWithA]{tacl2021v1} \pagestyle{plain} \thispagestyle{plain}

\usepackage[T1]{fontenc}
\usepackage{times}
\usepackage[scaled=1.06]{inconsolata}
\usepackage[scaled=0.9]{helvet}

\usepackage{amsmath}
\usepackage{amssymb}
\usepackage{amsthm}
\usepackage{mathtools}
\usepackage{mleftright}

\usepackage{multirow}

\usepackage{newtxmath} %

\usepackage{natbib}
\bibliographystyle{acl_natbib}
\usepackage{hyperref}
\usepackage{doi} %

\usepackage{booktabs}
\usepackage{url}
\usepackage{caption}
\usepackage{subcaption}

\usepackage[capitalize,nameinlink]{cleveref} %

\newtheorem{theorem}{Theorem}
\newtheorem{lemma}[theorem]{Lemma}
\newtheorem{corollary}[theorem]{Corollary}

\newtheorem{proposition}[theorem]{Proposition}
\theoremstyle{definition}
\newtheorem{definition}[theorem]{Definition}

\usepackage[inline]{enumitem}
\setlist[itemize]{topsep=0pt,partopsep=0pt,itemsep=0pt,parsep=0pt,leftmargin=1em}
\setlist[itemize,1]{label=--}
\setlist[itemize,2]{label=\textbullet}
\setlist[itemize,3]{label=*}
\setlist[description]{topsep=0pt,partopsep=0pt,itemsep=0pt,parsep=0pt,leftmargin=1em}
\setlist[enumerate]{topsep=0pt,partopsep=0pt,itemsep=0pt,parsep=0pt,leftmargin=1.75em}
\setlist[enumerate,1]{label=\arabic*.,ref=\arabic*}
\setlist[enumerate,2]{label=(\alph*)}
\setlist[enumerate,3]{label=\roman*.}

\newlist{subtheorems}{enumerate}{1}
\setlist[subtheorems]{topsep=0pt,partopsep=0pt,itemsep=0pt,parsep=0pt,leftmargin=1.75em}
\setlist[subtheorems,1]{label=(\alph*),ref=\alph*}

\AtBeginDocument{\suppressfloats[t]}

\usepackage{graphicx} %

\usepackage{tikz}
\usetikzlibrary{arrows,arrows.meta,positioning,shapes,decorations,automata,backgrounds,petri}
\usepackage{relsize}
\usetikzlibrary{fit}
\tikzset{fontscale/.style = {font=\relsize{#1}}}

\DeclareMathOperator{\rightmost}{{\mathbf{\blacktriangleright}}}
\newcommand{\att}[6]{\ensuremath{{#1}_{#2} \left[#3, #4\right] \; #5 : #6}}

\DeclareMathOperator{\SRASP}{\probclass{S-RASP}}

\newcommand{\ind}[1]{\mathbb{I}\left[#1\right]}

\newcommand{\lpto}{\overset{\textnormal{lp}}{\to}}

\newcommand{\R}{\mathbb{R}}

\newcommand{\probclass}[1]{\mathsf{#1}}

\newcommand{\Mon}{\probclass{Mon}}

\newcommand{\UHAT}{\probclass{UHAT}}

\newcommand{\AHAT}{\probclass{AHAT}}
\newcommand{\SMAT}{\probclass{SMAT}}
\newcommand{\tauSMAT}{\tau\textsf{-}{\mathsf{SMAT}}}

\newcommand{\LTL}{\probclass{LTL}}

\newcommand{\LTLCMon}{\probclass{\mathsf{LTL[\countl,\countr,+,\Mon]}}}
\newcommand{\poly}{\probclass{poly}}

\newcommand{\prob}[1]{\ensuremath{\textsf{\uppercase{#1}}}}

\newcommand{\softmax}{\mathop{\textnormal{softmax}}\nolimits}
\newcommand{\lhardmax}{\mathop{\textnormal{lhardmax}}}
\newcommand{\rhardmax}{\mathop{\textnormal{rhardmax}}}
\newcommand{\hardmax}{\mathop{\textnormal{hardmax}}}

\newcommand{\avghardmax}{\mathop{\textnormal{ahardmax}}}

\newcommand{\ReLU}{\textnormal{ReLU}}
\newcommand\norm[1]{\left\lVert#1\right\rVert_1}

\newcommand{\qproj}{\mat{W}^{(\textnormal{Q})}}
\newcommand{\qprojl}[1]{\mat{W}^{(\textnormal{Q},#1)}}
\newcommand{\kproj}{\mat{W}^{(\textnormal{K})}}
\newcommand{\kprojl}[1]{\mat{W}^{(\textnormal{K},#1)}}
\newcommand{\vproj}{\mat{W}^{(\textnormal{V})}}
\newcommand{\vprojl}[1]{\mat{W}^{(\textnormal{V},#1)}}
\newcommand{\ffwone}{\mat{W}_1}
\newcommand{\ffwonel}[1]{\mat{W}_1^{(#1)}}
\newcommand{\ffbonel}[1]{\vec{b}_1^{(#1)}}
\newcommand{\ffwtwo}{\mat{W}_2}
\newcommand{\ffwtwol}[1]{\mat{W}_2^{(#1)}}
\newcommand{\ffbtwol}[1]{\vec{b}_2^{(#1)}}
\newcommand{\qvec}[1]{\vec{q}_{#1}}
\newcommand{\kvec}[1]{\vec{k}_{#1}}
\newcommand{\vvec}[1]{\vec{v}_{#1}}

\makeatletter
\newcommand{\oset}[2]{%
  {\mathop{#2}\limits^{\vbox to -.5\ex@{\kern-\tw@\ex@
   \hbox{\scriptsize $#1$}\vss}}}}
\makeatother
\newcommand{\countl}{\ensuremath{\oset{\leftharpoonup}{\#}}}
\newcommand{\countr}{\ensuremath{\oset{\rightharpoonup}{\#}}}

\newcommand{\since}{\mathop{\textnormal{\textbf{since}}}}
\newcommand{\until}{\mathop{\textnormal{\textbf{until}}}}

\newcommand{\mat}[1]{\mathbf{#1}}
\renewcommand{\vec}[1]{\mathbf{#1}}
\newcommand{\str}[1]{\mathbf{#1}}

\newcommand{\TLPN}{\ensuremath{\mathsf{TL}[\ominus,\oplus]}}
\newcommand{\TLP}{\ensuremath{\mathsf{TL}[\ominus]}}
\newcommand{\TLS}{\ensuremath{\mathsf{TL}[\since]}}
\newcommand{\TLSU}{\ensuremath{\mathsf{TL}[\since,\until]}}
\newcommand{\TLPNSU}{\ensuremath{\mathsf{TL}[\ominus,\oplus,\since,\until]}}

\newcommand{\TLCCMon}{\ensuremath{\mathsf{TL}[\countl,\countr,+,\textnormal{\textnormal{\textsf{Mon}}}]}}
\newcommand{\TLCMon}{\ensuremath{\mathsf{TL}[\countl,+,\textnormal{\textnormal{\textsf{Mon}}}]}}

\newcommand{\TLPS}{\ensuremath{\mathsf{TL}[\ominus,\since]}}

\newcommand{\TLPSCMon}{\ensuremath{\mathsf{TL}[\ominus,\since,\countl,+,\textnormal{\textsf{Mon}}]}}

\newcommand{\prv}{\mathop{\ominus}}
\newcommand{\nxt}{\mathop{\oplus}}

\newcommand{\tlscale}{\gamma} %
\newcommand{\tablerow}{t}

\newenvironment{raspcode}{\par\vspace*{-2ex}\begin{small}}{\end{small}\ignorespacesafterend}

\newcommand{\attrdefault}[5]{\att{\rightmost}{#1}{#2}{#3}{#4}{#5}}

\newcommand{\attshort}[4]{\ensuremath{{#1}_{#2} \left[#3\right] \; #4}}
\DeclareMathOperator{\sumsymb}{{\bf psum}}
\newcommand{\attsum}[3]{\attshort{\sumsymb}{#1}{#2}{#3}}

\newcommand{\rechar}[1]{\text{`{#1}'}}

\newcommand{\true}{\top}
\newcommand{\false}{\bot}
\newcommand{\vecname}[1]{\ensuremath{\mathtt{#1}}}

\newcommand{\inputx}{\vecname{in}}
\newcommand{\outputy}{\vecname{out}}

\newcommand{\firstline}[1]{\hspace{1em}&\hspace{-1em}#1}

\newcommand{\Kleft}{\oset{\leftharpoonup}{K}}
\newcommand{\Kright}{\oset{\rightharpoonup}{K}}

\title{Simulating Hard Attention Using Soft Attention}

\author{
Andy Yang \\
University of Notre Dame \\
\href{mailto:ayang4@nd.edu}{\tt ayang4@nd.edu}
\And
Lena Strobl \\
Umeå University \\
\href{mailto:lena.strobl@umu.se}{\tt lena.strobl@umu.se} \\ 
\AND 
David Chiang \\
University of Notre Dame \\
\href{mailto:dchiang@nd.edu}{\tt dchiang@nd.edu}
\And
Dana Angluin \\
Yale University \\
\href{mailto:dana.angluin@yale.edu}{\tt dana.angluin@yale.edu}
}

\begin{document}

\maketitle

\begin{abstract}
We study conditions under which transformers using soft attention can simulate hard attention, that is, effectively focus all attention on a subset of positions. 
First, we examine several subclasses of languages recognized by hard-attention transformers, which can be defined in variants of linear temporal logic. We demonstrate how soft-attention transformers can compute formulas of these logics using unbounded positional embeddings or temperature scaling. 
Second, we demonstrate how temperature scaling allows softmax transformers to simulate general hard-attention transformers, using a temperature that depends on the minimum gap between the maximum attention scores and other attention scores.
\end{abstract}

\section{Introduction}
\label{sec:introduction}

A central element of transformers \citep{vaswani2017attention} is \emph{attention}, in which each position computes a weighted average of values from all unmasked positions. In standard attention, which we call \emph{soft} attention, the attention weights are computed by the $\softmax$ function and cannot be exactly $0$ (unless there is masking) or exactly $1$ (unless there is only one position). This is appropriate in applications like machine translation \citep{bahdanau2015neural}, where attention models the often fuzzy relationship of target words to source words.
But there is a tension between soft attention and discrete tasks like integer arithmetic, algebra, logical reasoning, finite automata, Turing machines, and computer programs.
Researchers trying to show that transformers can solve such tasks have often turned to other definitions of attention.

\Citet{hahn-2020-theoretical} defined what is now called \emph{unique-hard attention}, in which all attention is on a single position attaining the maximum attention score, and zero attention is paid to all other positions. 
\Citet{angluin2023masked} proved that unique-hard attention transformers ($\UHAT$s) with no position information besides masking recognize exactly the \emph{star-free regular languages}, and equivalences for other variants as well.
Although this means that $\UHAT$s can solve some interesting problems like the multi-query associative recall task \citep{friedman2023learning}, the star-free regular languages are a rather restricted class, so one might expect that $\UHAT$s are less expressive than standard soft-attention transformers ($\SMAT$s). But, perhaps surprisingly, it has not been shown previously that $\UHAT$s can be simulated by $\SMAT$s.

\Citet{pérez2019turingcompletenessmodernneural}, to prove that transformers with chain-of-thought are Turing-complete, introduced what is now called \emph{average-hard attention} (or \emph{saturated attention}): given a score vector, equal attention is paid to each position in which the maximum attention score is attained, and zero attention is paid to all other positions.
Average-hard attention transformers ($\AHAT$s) have been constructed to solve a variety of discrete tasks: matching parentheses \citep{yao2023selfattentionnetworksprocessbounded}, simulating an $n$-gram model \citep{svete-cotterell-2024-transformers}, simulating linear temporal logic with counting terms \citep{barcelo2023logical}, simulating the programming language S-RASP
\citep{strobl2024transformers}, and others. 
For many of these tasks, no construction using soft attention is known.

In this paper, we contribute new results that help to answer the question: \emph{Under what conditions can soft attention simulate hard attention?}  
We are interested in simulations of hard attention that are \emph{general} -- that is, for a class of hard attention transformers, not just for a particular language -- and \emph{parameter-uniform},
that is, the number and values of the parameters do not depend on the input length $n$.
This ensures that the same model and parameters can be used for arbitrary $n$.

However, a result of \citet[Lemma 5]{hahn-2020-theoretical} stands in the way.
If a transformer has
\begin{enumerate*}[label=(\arabic*),afterlabel={},itemjoin={\ }]
\item\label{item:softmax} %
soft attention,
\item\label{item:uniform} parameters whose number and values do not depend on $n$, 
\item\label{item:lipschitz} Lipschitz-continuous position-wise operations, and 
\item\label{item:bounded} bounded position embeddings (PEs), 
\end{enumerate*} then a change in one input symbol results in only an $O(1/n)$ change in any output activation. This is in contrast to $\AHAT$s, where a change in one input symbol can result in a $\Theta(1)$ change in an output activation.

Why is this property important? 
For example, \citet{chiang-cholak-2022-overcoming} construct a $\SMAT$ for \prob{Parity} (deciding whether the number of $1$'s in a string is odd), but as $n$ increases, the gap between the output for acceptance and rejection approaches~$0$, as it must. %
However, this is a problem for composability: If one module has discrete output values that are separated by $O(1/n)$, and another module expects discrete inputs that are separated by $\Theta(1)$, then the two modules cannot be composed.

The solutions that we are aware of each circumvent Hahn's lemma by dropping one of the above assumptions:
\begin{enumerate}
\item[\ref{item:softmax}] Temperature scaling, in which attention scores are scaled by a function of $n$ before the softmax. \Citet{chiang-cholak-2022-overcoming} and \citet{nakanishi2025scalablesoftmaxsuperiorattention} both experiment with $O(1/\log n)$ temperature and find that it can improve length generalization.
\item[\ref{item:uniform}] Rounding, used by \citet{sanford2024transformersparallelcomputationlogarithmic} to correct approximation errors. They round to $\Theta(\log n)$ bits, which requires a feedforward network with depth or width depending on $n$ (\cref{lemma-relu-mlp-rounding}), for a construction that is general, but not parameter-uniform. Similarly, \citet{li-etal-2024-empowers} round to constant precision, but the embedding size depends on $n$.
\item[\ref{item:lipschitz}] Layer normalization, as originally defined \citep{ba2016layernormalization}, is not Lipschitz-continuous, and can be used to increase sensitivity \citep{chiang-cholak-2022-overcoming,yao2023selfattentionnetworksprocessbounded,merrill2024illusionstatestatespacemodels,yang2024counting}.
\item[\ref{item:bounded}] Unbounded position embeddings, whose norm may grow as a function of $n$.
\end{enumerate}

Here, we consider general and parameter-uniform simulations of hard attention by soft attention, using unbounded position embeddings or temperature scaling (methods \labelcref{item:softmax,item:bounded} above).
Our results are summarized in \cref{fig:results}.

\begin{figure}
    \centering
    \tikzset{edgelabel/.style={align=center,font={\footnotesize}}}
    \begin{tikzpicture}[>={Stealth[length=2mm,width=1.5mm]},thick,x=1.5cm,y=1.5cm]
        \node[anchor=base] (tlpn) at (-0.1,0) {\TLPN};
        \node[anchor=base] (tlsu) at (1.3,0) {\TLSU};
        \node[anchor=base] (tlccmon) at (3.4,0) {\TLCCMon};
        \node[anchor=base] (ltl) at (0.5,0.75) {$\LTL$};
        \node[anchor=base] (ltlcmon) at (2, 1.3) {$\LTLCMon$};
        \node (ahat) at (2,2.4) {$\AHAT$};
        \node (smat) at (2,3.3) {$\tau\textsf{-}\SMAT$};

        \draw[->,dashed] (tlpn) to (ltl);
        \draw[->,dashed] (tlsu) to (ltl);
        \draw[->,dashed] (ltl) to (ltlcmon);
        \draw[->,dashed] (tlccmon) to (ltlcmon);
        \draw[->,dashed] (ltlcmon) to node[auto=right,text width=1.75cm,edgelabel,align=left] {\raggedright\citep{barcelo2023logical}} (ahat);

        \begin{scope}[->,thick,rounded corners]
        \draw (tlccmon) edge[bend right] node[edgelabel,right] {\S\ref{sec:utlcmon}} (smat);
        \draw (ahat) to node[edgelabel,auto=right] {\S\ref{sec:uuhat_to_smat_temp}} (smat);
        \draw (tlpn) edge[bend left=50] node[edgelabel,auto=left] {\S\ref{sec:prev-and-next}} (smat);
        \draw (tlsu) edge[bend left=50] node[edgelabel,auto=left]{\S\ref{sec:since_until}} (smat);
        \end{scope}
    \end{tikzpicture} \\[2ex]
    \begin{center} \small
        \begin{tabular}{llll}
            \toprule
            &\textbf{Simulatee} & \textbf{Temp.} $\tau(n)$ & \textbf{PE} \\
            \midrule
            \multirow{2}{*}{\S\ref{sec:prev-and-next}} & \multirow{2}{*}{$\TLPN$} & $O(1/n)$ & $O(1)$\\
            && $O(1)$ & $O(n)$ \\
            \midrule
            \multirow{2}{*}{\S\ref{sec:since_until}} & \multirow{2}{*}{$\TLSU$} & $O(1/n)$ & $O(1)$\\
            && $O(1)$ & $O(n)$ \\
            \midrule
            \multirow{2}{*}{\S\ref{sec:utlcmon}} & \multirow{2}{*}{$\TLCCMon$} & $O(1/n)$ & $O(n^2)$\\
            && $O(1)$ & $O(n^3)$ \\
            \midrule
            \S\ref{sec:uuhat_to_smat_temp} &$\AHAT$ & \multicolumn{2}{c}{See \cref{thm:smat_approx_hard_attention}} \\
            \bottomrule
        \end{tabular}
    \end{center}
    \caption{Simulation by $\tauSMAT$. Solid arrows denote results proved in this paper. Key: $\tau$ = temperature;
    PE = position embedding.
    }
    \label{fig:results}
\end{figure}

As mentioned above, \citet{barcelo2023logical} proved that linear temporal logic with counting, or $\LTLCMon$, can be simulated by an $\AHAT$. Thus, $\LTLCMon$ defines a subclass of the languages recognized by $\AHAT$s, and various fragments of $\LTLCMon$ define smaller subclasses.
Our first set of results (\cref{sec:soft-attention-LTLCmon}) examines several such subclasses -- among them, the languages recognized by $\UHAT$s -- and shows that they can be simulated by $\SMAT$s using either inverse polynomial temperature or polynomially bounded position embeddings.

Our second result (\cref{sec:uuhat_to_smat_temp}) concerns general $\AHAT$s and their direct simulation by $\SMAT$s. We classify $\AHAT$s according to the \emph{gap} $\gamma(n)$ that separates, for inputs of length $n$, the maximum attention score or scores from all other scores \citep{edelman2022inductive}.
In the literature on transformer expressivity, constructions of $\AHAT$s generally have gap~$\gamma(n) \in \Omega(1/n^k)$ for small constants $k$ (\cref{sec:applications}). 
We prove that any $\AHAT$ with gap $\gamma(n)$ can be approximately simulated by a $\SMAT$ with the same parameters, using soft attention scaled by a temperature not much lower than $\gamma(n)$.

The implications of our results for the expressivity and learnability of transformers in practice are discussed in \cref{sec:conclusion}.

\section{Preliminaries}
\label{sec:preliminaries}

\subsection{Notation}

We write $[n]$ for the set $\{1, \ldots, n\}$. For any Boolean value $b$, we let $\ind{b} = 1$ if $b$ is true and $0$ if $b$ is false.

If $X$ and $Y$ are sets, we write $X^+$ for the set of non-empty sequences over $X$, and we write $f \colon X^+ \lpto Y^+$ to state that $f$ is \emph{length-preserving}, that is, it maps a sequence $(x_1, \ldots, x_n) \in X^+$ to a sequence of the same length $(y_1, \ldots, y_n) \in Y^+$.

If $\mat{A}$ is a matrix, we write $\vec{A}_{i,*}$ for the $i$-th row of $\mat{A}$ and $\vec{A}_{*,j}$ for the $j$-th column of $\mat{A}$.

\subsection{Transformers}
\label{subsec:transformer}

A \emph{transformer} is a neural network that, in this paper, defines a length-preserving mapping from strings to strings. An input layer maps a string to a sequence of vectors. 
Then the network processes sequences of vectors through multiple layers, each consisting of a self-attention sublayer followed by a feed-forward sublayer.  
Finally, an output layer maps the sequence of vectors to a string.

\paragraph{Input}

Let $\Sigma$ be an input alphabet, and let $\str{w} = w_1 w_2 \cdots w_n \in \Sigma^+$ be an input sequence of length~$n$.
Each token $w_i$ is mapped to a vector $\vec{x}_i = \text{WE}(w_i) + \text{PE}_n(i)$, where $\text{WE} \colon \Sigma \to \R^{d}$ is a \emph{word embedding} and $\text{PE}_n \colon \mathbb{N} \to \R^d$ a \emph{position embedding}.
Thus the input sequence is represented as a sequence of vectors $(\vec{x}_{1}, \cdots, \vec{x}_{n}) \in (\R^d)^+$.

\paragraph{Transformer Layers}

We initialize $\vec{h}^{(0)}_i = \vec{x}_i$ for all $i$. For each layer $\ell = 1, \dots, L$ and position $i = 1, \ldots n$, we compute
    \[
    \vec{c}^{(\ell)}_i = \mathrm{SA}^{(\ell)} \left( \vec{h}^{(\ell-1)}_1, \ldots, \vec{h}^{(\ell-1)}_n \right)_i + \vec{h}^{(\ell-1)}_i
    \]
    where $\mathrm{SA}^{(\ell)}$ is the \emph{self-attention} function defined below in \cref{def:attn}, with parameters $\qprojl{\ell}$, $\kprojl{\ell}$, $\vprojl{\ell}$, and weighting function~$\mathcal{S}^{(\ell)}$.
    Then we apply a \emph{feed-forward network} (FFN):
    \[
    \vec{h}^{(\ell)}_i = \ffwtwol{\ell} \, \sigma \left( \ffwonel{\ell} \, \vec{c}^{(\ell)}_i + \ffbonel{\ell} \right ) + \ffbtwol{\ell} + \vec{c}^{(\ell)}_i
    \]
    where $\sigma$ is the $\ReLU$ activation function, $\ffwonel{\ell} \in \R^{d_{\textrm{f}} \times d}$, $\ffwtwol{\ell} \in \R^{d \times d_{\textrm{f}}}$, $\ffbonel{\ell} \in \R^{d_{\textrm{f}}}$, and $\ffbtwol{\ell} \in \R^{d}$.

    The ${}+\vec{h}^{(\ell-1)}_i$ and ${}+\vec{c}^{(\ell)}_i$ terms above are called \emph{residual connections}. In our constructions, they are useful for making values computed at earlier steps to be available at all later steps.

    Transformer layers are usually implemented with \emph{layer normalization}, but we omit this here, as this is a separate method (method \ref{item:lipschitz} in \cref{sec:introduction}) for overcoming limitations of soft attention, which may interact in complicated ways with temperature scaling and unbounded PEs.

\paragraph{Output}

After $L$ layers, the transformer produces an output sequence $\left(\vec{y}_1, \ldots, \vec{y}_n\right) \in (\R^{d})^+$, where $\vec{y}_i = \vec{h}^{(L)}_{i}$ for all $i$.
To map these vectors to output symbols, we assume, as is common in theoretical research, that each $\vec{y}_i$ is exactly the embedding of a symbol in the output alphabet.
To accept or reject strings, we look at the output symbol at position~$n$ and designate two symbols as accept and reject decisions. Note that this implies a fixed-size gap between the outputs for acceptance and rejection.

\paragraph{Self-Attention}
A self-attention layer computes weighted sums of \emph{value} vectors at all positions, where the weights are determined by \emph{query} and \emph{key} vectors.

\begin{definition}[Self-Attention]\label{def:attn}
A self-attention layer with linear transformations
\begin{align*}
\qproj, \kproj \colon \R^d &\to \R^{d_{\textrm{k}}} \\
\vproj \colon \R^d &\to \R^{d}
\end{align*}
and a weighting function
\[
\mathcal{S} \colon \R^+ \lpto \R^+
\]
is a function
\begin{align*}
    \mathrm{SA} \colon (\R^d)^+ &\lpto (\R^d)^+ \\
    \mathrm{SA}\mleft(\vec{h}_{1}, \ldots, \vec{h}_{n} \mright) &= (\vec{c}_1, \ldots, \vec{c}_n)
\end{align*}
where, for positions $i$ and $j$:
\begin{align*}
\qvec{i} &= \qproj \vec{h}_{i} &
\kvec{j} &= \kproj \vec{h}_{j} &
\vvec{j} &= \vproj \vec{h}_{j} \\
s_{ij} &= \frac{\qvec{i}^\top \kvec{j}}{\sqrt{d_{\textrm{k}}}} &
\alpha_{i,*} &= \mathcal{S}(s_{i,*}) &
\vec{c}_i &= \sum_{j=1}^{n} \alpha_{ij} \vvec{j}.
\end{align*}

We consider only single-head attention layers for simplicity, as a $k$-head attention layer can be simulated using $k$ layers of single-head attention (each layer computes the value of one head, and residual connections are used to accumulate the sum of the heads).
\end{definition}

\paragraph{Masking} In \emph{future-masked attention}, the scores are redefined to
\[ s_{ij} = \begin{cases} \dfrac{\qvec{i}^\top \kvec{j}}{\sqrt{d_{\textrm{k}}}} & j \le i \\
-\infty & \text{otherwise}
\end{cases}
\]
and we define $\exp(-\infty) = 0$.
Similarly, in \emph{past-masked attention}, we have $s_{ij} = -\infty$ iff $j < i$.

Some of our proofs construct transformers that use both future-masked and past-masked attention \citep[also cf.][]{yao2023selfattentionnetworksprocessbounded}. In practice, mixed attention masks have been employed by applying different masks to different attention heads \citep{shen+:2018} or by applying multiple masks with learnable weights \citep{mcdonald-chiang-2021-syntax,sible-chiang-2024}.

\paragraph{Weighting Function}
The weighting function $\mathcal{S} \colon \R^+ \lpto \R^+$ computes the attention weights $\alpha_{i,*}$ based on the attention scores $s_{i,*}$.
A common choice is the \emph{softmax} function:
\[
\left[\softmax(s_1, \ldots, s_n)\right]_j = \frac{e^{s_j}}{\sum_{k=1}^{n} e^{s_k}}.
\]
In hard attention, the attention weights are assigned to focus all attention on the maximum-scoring position or positions.

\begin{definition}[Hardmax]
\label{def:hardmax}
The leftmost, rightmost, and average-hardmax functions are defined as:
\begin{align*}
I(\vec{s}) &= \{ i \in [|\vec{s}|] \mid s_i = \max \vec{s} \} \\
[\lhardmax\vec{s}]_i &= \mathbb{I}[ i = \min I(\vec{s}) ] \\
[\rhardmax\vec{s}]_i &= \mathbb{I}[ i = \max I(\vec{s}) ] \\
[\avghardmax\vec{s}]_i &= \frac{1}{|I(\vec{s})|} \mathbb{I}[ i \in I(\vec{s}) ].
\end{align*}
\end{definition}
The $\lhardmax$ and $\rhardmax$ functions return a one-hot vector with a $1$ at the position of the leftmost or rightmost maximal element, respectively. 
The $\avghardmax$ function pays equal attention to all the maximal elements.

\paragraph{Temperature Scaling}

We also consider scaling the attention scores by an inverse \emph{temperature} before applying the softmax.

\begin{definition}[Temperature-Dependent Softmax]
\label{def:temperature-dependent-softmax}
For a temperature $\tau > 0$, the temperature-dependent softmax function $\softmax_{\tau} \colon \R^+ \lpto \R^+$ is defined as:
\[
[\softmax_{\tau}(s_i, \ldots, s_n)]_j = \frac{e^{s_j / \tau}}{\sum_{k=1}^{n} e^{s_k / \tau}}.
\]
When $\tau = 1$, this reduces to standard softmax. 

\end{definition}

\begin{definition}\label{def:transformer_abbreviations}
    We categorize transformers by the weighting functions used in their attention layers: 
    \begin{center}
    \begin{tabular}{r@{ }c@{ }l}
        $\SMAT$ & uses & $\softmax$ \\
        $\tauSMAT$ & uses & $\softmax_\tau$ \\
        $\AHAT$ & uses & $\avghardmax$ \\
        $\UHAT$ & uses & both $\lhardmax$ and $\rhardmax$.
    \end{tabular}
    \end{center}
\end{definition}

\section{Basic Approximations}
\label{sec:approximations}

\begin{figure} 
\centering
\tikzset{every node/.style={font={\scriptsize}}}
\tikzset{tick/.style={gray}}
\tikzset{score/.style={very thick}}
\begin{tabular}{cc}
\begin{tikzpicture}[x=5mm,y=5mm]
\draw[tick] (-2,0) -- (3,0) node[right] {$\phantom-0$};
\draw[tick] (-2,-1) -- (3,-1) node[right] {$-\gamma$};
\draw[tick] (-2,-2) -- (3,-2) node[right] {$-2\gamma$};
\draw[score] (-2,-2.2) -- (-1,-2.2) -- (-1,-1.2) -- (0,-1.2) -- (0,0) -- (1,0) -- (1,-1) -- (2,-1) -- (2,-2.1) -- (3,-2.1);
\node[align=center,font={\small}] at (0.5,-4) {ziggurat \\ (\cref{lem:softmax_bound_constant})};
\end{tikzpicture} &
\begin{tikzpicture}[x=5mm,y=2.5mm]
\draw[tick] (-2,0) -- (3,0) node[right] {$9\gamma$};
\draw[tick] (-2,-2) -- (3,-2) node[right] {$7\gamma$};
\draw[tick] (-2,-4) -- (3,-4) node[right] {$5\gamma$};
\draw[score] (-2,-4) -- (-1,-4) -- (-1,-1) -- (0,-1) -- (0,0) -- (1,0) -- (1,-1) -- (2,-1) -- (2,-4) -- (3,-4);
\node[align=center,font={\small}] at (0.5,-8) {parabola \\ (\cref{table-lookup-approx})};
\end{tikzpicture} 
\end{tabular}
\\[3ex]
\begin{tabular}{@{}c@{\!\!\!}c@{\!\!\!}c@{}}
\multicolumn{3}{c}{\begin{tikzpicture}[x=3mm,y=3mm,baseline=-6.5mm]
\draw[tick] (-2,0) -- (3,0) node[right] {$\phantom-0$};
\draw[tick] (-2,-1) -- (3,-1) node[right] {$-\gamma$};
\draw[score] (-2,-1) -- (-1,-1) -- (-1,0) -- (2,0) -- (2,-1) -- (3,-1);
\node[align=center,font={\small}] at (0.5,-3) {no tie-breaker \\ $\overbrace{\hspace{2.75in}}$};
\end{tikzpicture}} \\
\begin{tikzpicture}[x=3mm,y=3mm]
\draw[tick] (-2,0) -- (3,0) node[right] {$\phantom-0$};
\draw[tick] (-2,-1) -- (3,-1) node[right] {$-25\gamma$};
\draw[tick] (-2,-2) -- (3,-2) node[right] {$-50\gamma$};
\draw[score] (-2,-2) -- (-1,-2) -- (-1,-0.5) -- (0,-0.5) -- (0,-0.333) -- (1,-0.333) -- (1,-0.25) -- (2,-0.25) -- (2,-1.2) -- (3,-1.2);
\node[align=center,font={\small}] at (0.5,-5) {$-1/i$ \\ tie-breaker \\ (\cref{thm:tie}\ref{thm:tie_causal})};
\end{tikzpicture} &
\begin{tikzpicture}[x=3mm,y=3mm]
\draw[tick] (-2,1) -- (3,1) node[right] {$\phantom-10\gamma$};
\draw[tick] (-2,0) -- (3,0) node[right] {$\phantom-0$};
\draw[tick] (-2,-1) -- (3,-1) node[right] {$-10\gamma$};
\draw[score] (-2,-0.8) -- (-1,-0.8) -- (-1,0.4) -- (0,0.4) -- (0,0.6) -- (1,0.6) -- (1,0.8) -- (2,0.8) -- (2,0) -- (3,0);
\node[align=center,font={\small}] at (0.5,-4) {$+i/n$ \\ tie-breaker \\ (\cref{thm:tie}\ref{thm:tie_rightmost})};
\end{tikzpicture} &
\begin{tikzpicture}[x=3mm,y=3mm]
\draw[tick] (-2,0) -- (3,0) node[right] {$\phantom-0$};
\draw[tick] (-2,-1) -- (3,-1) node[right] {$-10\gamma$};
\draw[tick] (-2,-2) -- (3,-2) node[right] {$-20\gamma$};
\draw[score] (-2,-1.2) -- (-1,-1.2) -- (-1,-0.4) -- (0,-0.4) -- (0,-0.6) -- (1,-0.6) -- (1,-0.8) -- (2,-0.8) -- (2,-2) -- (3,-2);
\node[align=center,font={\small}] at (0.5,-5) {$-i/n$ \\ tie-breaker \\ (\cref{thm:tie}\ref{thm:tie_leftmost})};
\end{tikzpicture} 
\end{tabular}
\caption{Illustration of the various attention score patterns in \cref{sec:approximations}. %
}
\label{fig:approximations}
\end{figure}

In this section, we introduce some techniques that will be used throughout the paper. These techniques all serve to bound and correct the difference between hard attention and soft attention.

\begin{definition} \label{def:tieless}
For any vector of attention scores $\vec{s} \in \R^+$, let $s_{\textnormal{max}} = \max_i s_i$.
We say that $\vec{s}$ is \emph{tieless} if $|\{i \mid s_i = s_{\textnormal{max}}\}| = 1$, and $\vec{s}$ has \emph{gap $\gamma$} if for all $i$ such that $s_i \ne s_{\textnormal{max}}$, we have $s_i \le s_{\textnormal{max}}-\gamma$.
\end{definition}
If $\vec{s}$ is tieless, then $\lhardmax\vec{s} = \rhardmax\vec{s} = \avghardmax\vec{s}$, and we simply write $\hardmax\vec{s}$ for all three.

\Citet{edelman2022inductive} prove a bound on the difference between $\softmax$ and $\avghardmax$ for attention scores with gap $\gamma$. Here, we prove several stronger bounds under stronger assumptions.

\subsection{Ziggurat Attention Scores}

First, we give a bound for attention scores that are not only tieless with gap $\tlscale$, but also decrease by at least $\tlscale$ at every position in a ``ziggurat'' pattern (\cref{fig:approximations}).
\begin{lemma} 
    \label{lem:softmax_bound_constant}
    Suppose an attention layer has scores $\vec{s} = (s_1, \ldots, s_n)$, and there is a $j^* \in [n]$ such that for all $j \in [n]$, $s_{j} \le s_{j^*} - |j-j^*|\gamma$. 
    Let $v_1, \ldots, v_n$ be the attention values, and let $v_{\textnormal{max}} = \max_j |v_j|$. 
    Then the difference between the hard and soft attention outputs is
    \[ \left| \sum_{j=1}^n [\hardmax \vec{s} - \softmax \vec{s}]_j v_j \right| \le 4e^{-\tlscale} v_{\textnormal{max}}.\]
\end{lemma}
\begin{proof}
See \cref{sec:softmax_bound_constant_proof}.
\end{proof}

\subsection{Table Lookup}

Some constructions of hard-attention transformers use a \emph{table lookup} operation \citep{barcelo2023logical,strobl2024transformers}: given $\tablerow_i \in [n]$ for $i \in [n]$, we want to retrieve the value $v_{\tablerow_i}$.
This is accomplished using a parabola-shaped pattern that peaks at position $\tablerow_i$ (\cref{fig:approximations}).
Under hard attention, each position $i$ attends solely to position $j$ such that $j = \tablerow_i$.
To replicate this using soft attention, we must approximate it.
When $|v_j|$ is bounded, \cref{lem:softmax_bound_constant} suffices to bound the approximation error, but in one case, we have $v_j = j$, so we need the following lemma. 
\begin{lemma}
    \label{table-lookup-approx}
    Let $\tlscale \ge 1$ and let $\tablerow$ and $n$ be integers such that $1 \le \tablerow \le n$. For $j \in[n]$, define $s_{j} =\tlscale (2 \tablerow j - j^2)$.
    Then
    \[
    \left| \sum_{j=1}^n [\hardmax \vec{s} -\softmax \vec{s}]_j j \right| \le \tfrac32e^{-\tlscale}.
    \]
\end{lemma}

\begin{proof}
See \cref{sec:table-lookup-approx-proof}.
\end{proof}

\subsection{Tie-Breaking}

Next, we show some special cases of \cref{lem:softmax_bound_constant}, which are useful when using soft attention to simulate unique-hard attention. In unique-hard attention, if two positions are tied for the maximum score, the tie needs to be broken to the left or the right. 
To simulate this using soft attention, we need to %
add a \emph{tie-breaking} term, %
allowing $\softmax$ to approximate $\rhardmax$ or $\lhardmax$ sufficiently closely.

There are three variations, depending on whether the attention to be approximated is $\rhardmax$ or $\lhardmax$ and on whether the sequence length $n$ is known or not (\cref{fig:approximations}).

\begin{lemma} \label{thm:tie}
Let $s_1, \ldots, s_n$ be attention scores with gap $\gamma \ge 1$. Let $v_1, \ldots, v_n$ be attention values, and let $v_{\textnormal{max}} = \max_j v_j$.
\begin{enumerate}[label=(\alph*),ref=\alph*]
\item\label{thm:tie_causal} If\/ $\hat{s}_j = \gamma n^2 \left(s_j - \frac{1}{j}\right)$,
then
\[ \!\!\left|\sum_{j=1}^n [\rhardmax \vec{s} - \softmax \hat{\vec{s}}]_j v_j\right| \le 4e^{-\gamma} v_{\textnormal{max}}.\]
\item\label{thm:tie_rightmost} If\/ $\hat{s}_j = 2 \gamma n \left(s_j + \frac{j}{2n} \right)$,
then 
\[ \!\!\left|\sum_{j=1}^n [\rhardmax \vec{s} - \softmax \hat{\vec{s}}]_j v_j\right| \le 4e^{-\gamma} v_{\textnormal{max}}.\]
\item\label{thm:tie_leftmost} If\/ $\hat{s}_j = 2 \gamma n \left(s_j - \frac{j}{2n} \right)$,
then 
\[ \!\!\left|\sum_{j=1}^n [\lhardmax \vec{s} - \softmax \hat{\vec{s}}]_j v_j\right| \le 4e^{-\gamma} v_{\textnormal{max}}.\]
\end{enumerate}
\end{lemma}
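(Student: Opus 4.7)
The plan is to reduce each part to a direct application of \cref{lem:softmax_bound_constant}. In each case I would choose $j^*$ to be the appropriate extremal maximizer of $\vec{s}$ (rightmost in parts~(\ref{thm:tie_causal})--(\ref{thm:tie_rightmost}), leftmost in part~(\ref{thm:tie_leftmost})), and then verify the single inequality
\[
\hat{s}_j \;\le\; \hat{s}_{j^*} - \gamma\,|j - j^*| \qquad \text{for all } j \ne j^*.
\]
Once this is established, \cref{lem:softmax_bound_constant} yields $\norm{\hardmax \hat{\vec{s}} - \softmax \hat{\vec{s}}} \le 4e^{-\gamma}$, and because $j^*$ is then the unique maximizer of $\hat{\vec{s}}$ and agrees with the extremal maximizer of $\vec{s}$, we have $\hardmax \hat{\vec{s}} = \rhardmax \vec{s}$ (resp.\ $\lhardmax \vec{s}$), giving the stated bound.

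To verify the inequality, I would split into two cases according to whether $s_j = M := \max_k s_k$ or $s_j < M$. In the tied case $s_j = M$, the original-score contribution $\gamma\cdot(\text{scale})\cdot(s_{j^*} - s_j)$ vanishes, so the tiebreaker alone must deliver a margin of at least $\gamma|j - j^*|$; in the non-tied case the gap hypothesis gives $s_{j^*} - s_j \ge 1$, so the scaling factor in front of the original scores dominates any adversarial contribution from the tiebreaker.

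Parts~(\ref{thm:tie_rightmost}) and~(\ref{thm:tie_leftmost}) are the easy cases: the tiebreaker $\pm j/(2n)$ is linear in $j$, so in the tied case we get exactly $\gamma|j^*-j|$, and in the non-tied case the score scaling $2\gamma n$ produces a margin of $\ge 2\gamma n - \gamma(n-1) \ge \gamma|j - j^*|$. Part~(\ref{thm:tie_causal}) is the main obstacle, because the tiebreaker $-1/j$ is nonlinear. The key identity is $1/j - 1/j^* = (j^* - j)/(j j^*)$; combined with $j j^* \le n^2$ this yields $\gamma n^2(1/j - 1/j^*) \ge \gamma(j^* - j)$ for $j < j^*$, which suffices in the tied case. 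In the non-tied case with $j > j^*$, the tiebreaker contribution is negative but bounded in magnitude by $\gamma n^2(1 - 1/n) = \gamma n(n-1)$, leaving a net margin of at least $\gamma n^2 - \gamma n(n-1) = \gamma n \ge \gamma|j - j^*|$. The whole point of the $n^2$ scaling in part~(\ref{thm:tie_causal}) is to simultaneously absorb the smallest tied spacing (of order $1/n^2$) and the largest untied cancellation (of order $1$), and checking these two regimes is the only calculation of substance in the proof.
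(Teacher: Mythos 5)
Your proposal is correct and follows essentially the same route as the paper: pick the rightmost (resp.\ leftmost) maximizer $j^*$ of $\vec{s}$, split into the tied case ($s_j = s_{j^*}$, where the tiebreaker supplies the margin $\gamma|j-j^*|$ via $1/j - 1/j^* = (j^*-j)/(jj^*)$ or the linear term) and the untied case ($s_j \le s_{j^*}-1$, where the scaling absorbs the worst-case tiebreaker, leaving at least $\gamma n$), and then invoke \cref{lem:softmax_bound_constant}. Your two regime calculations for part~(\ref{thm:tie_causal}) match the paper's exactly, and your explicit observation that $j^*$ remains the unique maximizer of $\hat{\vec{s}}$ (so $\hardmax\hat{\vec{s}} = \rhardmax\vec{s}$) is a step the paper leaves implicit.
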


\begin{proof}
By \cref{lem:softmax_bound_constant}; see \cref{sec:tie_proof}.
\end{proof}

\subsection{Approximate Booleans}\label{sec:approx_bool}

The output of hard attention is often interpreted as a Boolean value: $0$ for false and $1$ for true. If we approximate hard attention using soft attention, we get an approximate Boolean value. For any $\delta < 1/2$, we treat any number in $(-\infty,\delta]$ as false and any number in $[1-\delta,\infty)$ as true. For concreteness, we set $\delta = 1/4$. A transformer can convert approximate Booleans into exact ones using a FFN that rounds them to $0$ or~$1$.
\begin{lemma}
\label{lem:piecewise-simulation-ReLU-FFN}
    We can compute the following function with a FFN. 

    \[f(x) = \begin{cases}
        0 & x\leq \frac{1}{4}\\
        2x - \frac{1}{2} &  \frac{1}{4} \leq x \leq \frac{3}{4}\\
        1 & \frac{3}{4} \leq x.
    \end{cases}
    \quad
    \begin{tikzpicture}[baseline=0.5cm]
    \draw[very thick] (-0.5,0) -- (0.25,0) -- (0.75,1) -- (1.5,1);
    \begin{scope}[gray,every node/.style={font={\scriptsize}}]
    \draw (-0.5,0) -- (1.5,0) node[right] {$0$};
    \draw (-0.5,1) -- (1.5,1) node[right] {$1$};
    \draw (0,1.25) -- (0,-0.25) node[below] {$0$};
    \draw (1,1.25) -- (1,-0.25) node[below] {$1$};
    \end{scope}
    \end{tikzpicture}
    \]
    
\end{lemma}

\begin{proof}
    Any function that is continuous piecewise affine with a finite number of pieces can be computed by a FFN. Function $f$ can be computed by
        \[f(x) = 2\,\ReLU(x - \tfrac14) - 2\,\ReLU(x - \tfrac34). \tag*{\qedhere} \]
\end{proof}

\subsection{First and Last Position}

Finally, we show how to approximately mark the first and last positions of the input sequence and then correct the marker to an exact Boolean value.
\begin{proposition} \label{thm:first_last}
    A $\SMAT$ with a PE including $(-1)^i$ can compute the following functions $f \colon \Sigma^+ \lpto [0,1]^+$:
    \begin{subtheorems}
    \item\label{thm:first} $f(\str{w}) = (1, 0, \ldots, 0, 0)$ using future-masking
    \item\label{thm:last} $f(\str{w}) = (0, 0, \ldots, 0, 1)$ using past-masking
    \item\label{thm:one_over_i} $f(\str{w}) = (1/1, 1/2, \ldots, 1/(n-1), 1/n)$ using future-masking
    \item\label{thm:one_over_n_minus_i} $f(\str{w}) = (1/n, 1/(n-1), \ldots, 1/2, 1/1)$ using past-masking.
    \end{subtheorems}
\end{proposition}
\begin{proof}
To mark the first position (\ref{thm:first}), use uniform future-masked attention with values
$v_j = (-1)^{j+1}$.
If $i = 1$, then all attention is on position $1$, and the output value is $1$. If $i > 1$, then the attention output is at most $1/3$. So a FFN, similar to the one in \cref{lem:piecewise-simulation-ReLU-FFN}, can map all such values to $0$.
To mark the last position (\ref{thm:last}), we proceed as above, but using past-masked attention. If $i = n$, then the attention output is $1$ or $-1$; if $i < n$, the attention output is in $[-1/3,1/3]$. The FFN outputs $1$ if the former is true and $0$ if the latter is true.

Furthermore, we can compute the quantity $1/i$~(\ref{thm:one_over_i}) using uniform future-masked attention on the output of the above construction, and $1/(n+1-i)$~(\ref{thm:one_over_n_minus_i}) using past-masking.
\end{proof}

\section{Soft Attention, LTL and Counting}
\label{sec:soft-attention-LTLCmon}

As mentioned above, \citet{barcelo2023logical} introduced a temporal logic with counting, $\LTLCMon$ (to be defined below), and showed that it can be simulated by $\AHAT$s. 
The languages definable in $\LTLCMon$ therefore form a subclass of those recognized by $\AHAT$s that might be more amenable to simulation by $\SMAT$s.
To show this, we find it useful to factor $\LTLCMon$ into three logics, which account for three different types of hard-attention patterns:
\begin{itemize}
    \item $\TLPN$, which accounts for immediate predecessor and successor, for example, $n$-grams (\cref{sec:prev-and-next}).
    \item $\TLSU$, which accounts for tie-breaking, for example, flip-flops or induction heads (\cref{sec:since_until}).
    \item $\TLCCMon$, which accounts for numerical relations, for example, \prob{Parity} or \prob{Majority} (\cref{sec:utlcmon}).
\end{itemize}

In describing the first simulation in \cref{sec:prev-and-next}, we will make some general remarks that will apply to the other simulations as well.

\subsection{Previous and Next}
\label{sec:prev-and-next}

First we show that $\SMAT$s can simulate formulas in $\TLPN$, which has just the previous ($\prv$) and next ($\nxt$) operations.

This logic, like all the logics presented in this section, is a temporal logic, in which each formula $\phi$ is true or false depending on the position in a string. 
\begin{definition}[\TLPN]\label{def:tlpn}
    Let $\sigma$ denote symbols of a given alphabet $\Sigma$. The syntax of \TLPN{} is:
    \begin{align*}
        \phi &::= Q_\sigma \mid \lnot \phi \mid \phi_1\land\phi_2 \mid \phi_1\lor\phi_2 \mid \prv\phi\mid \nxt\phi
    \end{align*}
    The semantics of $\TLPN$ is given by the relation $\str{w}, i \models \phi$, which means that $\phi$ is true at position $i$ of $\str{w}$. This relation is defined as follows:
    \begin{align*}
        \str{w}, i  &\models Q_\sigma && \text{iff $w_i=\sigma$}\\
        \str{w}, i  &\models \lnot\phi && \text{iff $\str{w},i\not\models\phi$}\\
        \str{w}, i  &\models \phi_1\land\phi_2 && \text{iff $\str{w},i\models\phi_1$ and $\str{w},i\models\phi_2$}\\
        \str{w}, i  &\models \phi_1\lor\phi_2 && \text{iff $\str{w},i\models\phi_1$ or $\str{w},i\models\phi_2$}\\
        \str{w}, i  &\models \prv \phi && \text{iff $\str{w},(i-1)\models \phi$}\\
        \str{w}, i  &\models \nxt \phi && \text{iff $\str{w},(i+1)\models \phi$}
    \end{align*}    
A formula $\phi$ is \emph{satisfied} by a string $\str{w}$ iff $\phi$ is true at the last symbol of $\str{w}$, that is, $\str{w}, |\str{w}| \models \phi$.

\end{definition}

This logic accounts for the hard-attention patterns used in simulating $n$-gram models. For instance, the following $\TLPN$ formula detects the $3$-gram $101$ ending at the current position:
\begin{align*}
    \phi_{101} &= \prv\prv Q_1\land \prv Q_0\land Q_1
\end{align*}
Then $10101, 3 \models \phi_{101}$ and $10101, 5 \models \phi_{101}$, but $10101, 4 \not\models \phi_{101}$. Also, the whole string $10101$ satisfies $\phi_{101}$, that is, $10101 \models \phi_{101}$.

We now show how to simulate $\TLPN$ in a $\SMAT$. 
There are many conceivable ways to do this. We focus on $\SMAT$s that use temperature scaling (and bounded PEs) and those that use unbounded PEs (and no temperature scaling). In both cases, we also consider $\SMAT$s that use only future-masking and do not depend on $n$, suitable for autoregressive language modeling.

\begin{theorem}\label{thm:tlpn_to_smat}
Any formula $\phi$ of\/ $\TLPN$ can be simulated by a $\tauSMAT$ with either:
\begin{subtheorems}
    \item\label{thm:tlpn_to_smat_temp} $\tau(n) = 1/n$ and PEs including $i/n$ and $(-1)^i$
    \item\label{thm:tlpn_to_smat_pe} $\tau(n) = 1$ and PEs including $i/n$, $(-1)^i$, and $n$.
\end{subtheorems}
\end{theorem}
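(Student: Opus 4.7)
The proof is by structural induction on $\phi$, maintaining the invariant that every position $i$ carries an approximate-Boolean value (in the sense of \cref{sec:approx_bool}) for every subformula of $\phi$ evaluated at $i$, alongside the PE features. The atomic case $Q_\sigma$ is read directly from the word embedding, and the Boolean connectives are computed by a position-wise FFN, re-tightening the approximate Booleans using \cref{lem:piecewise-simulation-ReLU-FFN}. The interesting cases are $\prv \phi$ and $\nxt \phi$, which I will handle symmetrically: $\prv \phi$ with future-masked attention, and $\nxt \phi$ with past-masked attention using a mirror-image score.

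The plan for $\prv \phi$ is to build a future-masked self-attention layer whose softmax weights at each position $i \ge 2$ concentrate on $j = i - 1$, so that with the approximate Boolean of $\phi$ at position $j$ carried as the value $v_j$, the output $\sum_{j \le i} \alpha_{ij} v_j$ approximates $v_{i-1}$; a subsequent FFN then rounds back to an approximate Boolean. The edge case $i = 1$ is overridden to $0$ using the first-position marker of \cref{thm:first} (analogously, $\nxt \phi$ at $i = n$ is overridden via \cref{thm:last}). The core technical task is to design attention scores with a unique maximum at $j = i - 1$ satisfying the gap hypothesis of \cref{lem:softmax_bound_constant} with a constant $\gamma$, so that the softmax/hardmax $\ell_1$ error is $4 e^{-\gamma}$ independent of $n$. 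The crucial trick is to exploit the parity feature: under future-masking, the predecessor $i - 1$ is the unique largest $j \le i$ of parity opposite to $i$. I therefore compose the score from a large parity penalty on same-parity $j$ together with a milder linear-in-$j$ reward, so that the penalty eliminates same-parity competitors (notably $j = i$) and the linear term selects the largest remaining opposite-parity candidate.

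Concretely, for case (a) the unscaled bilinear score is $s_{ij} = K_3 (j/n) - K_2 (-1)^i (-1)^j$, constructed as $q_i^\top k_j$ with $q_i$, $k_j$ linear in $\{1, i/n, (-1)^i\}$ and $\{1, j/n, (-1)^j\}$. The temperature $\tau(n) = 1/n$ amplifies this to an effective score $K_3 j - K_2 n (-1)^{i+j}$, whose $\Theta(K_2 n)$ parity penalty on same-parity positions dominates the $O(K_3 n)$ linear reward whenever $K_2 \ge K_3$. For case (b) no temperature amplification is required: the PE entry $n$ directly supplies the needed $n$-factor through the bilinear cross-product $q_{i,a} k_{j,a} = n \cdot (j/n) = j$ (with $q_{i,a} = n$ and $k_{j,a} = j/n$), so the effective score $K_3 j - K_2 (-1)^{i+j}$ with constant $K_2$ already peaks uniquely at $j = i - 1$. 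The main obstacle in both cases is the quantitative gap analysis, which I expect to be routine: one checks that the score at $j = i - 1$ exceeds the score at $j = i$ (same parity, adjacent) and at $j = i - 3$ (opposite parity, distance two) by at least a constant $\gamma$ uniformly in $n$, and then \cref{lem:softmax_bound_constant} yields the required $4 e^{-\gamma}$ error bound, which can be absorbed into the $1/4$ tolerance of the approximate-Boolean representation by taking $\gamma$ a sufficiently large constant.
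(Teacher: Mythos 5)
Your proposal is correct, and it follows the paper's overall plan (structural induction, attention concentrating on $j=i\mp1$, endpoint override via \cref{thm:first_last}, FFN rounding via \cref{lem:piecewise-simulation-ReLU-FFN}), but it differs in the one place where there is a real design choice: the attention pattern for $\prv$ and $\nxt$. The paper uses two $i$-independent heads with scores $6(\tfrac12(-1)^j + \tfrac{j}{2n})$ and its parity-flipped twin -- one retrieving the rightmost even $j\le i$, the other the rightmost odd $j\le i$ -- and then lets the FFN select between them according to the parity of $i$, invoking the packaged tie-breaking bound \cref{thm:tie}\ref{thm:tie_rightmost}. You instead fold the parity of $i$ into the query, obtaining a single head with score proportional to $K_3\,j/n - K_2(-1)^{i+j}$, whose unique unmasked maximizer is directly $j=i-1$, and you appeal to \cref{lem:softmax_bound_constant} directly. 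Both are sound; yours saves a head (and the FFN multiplexer) at the cost of redoing the gap verification that \cref{thm:tie} packages. One small caution on that verification: \cref{lem:softmax_bound_constant} requires the gap to grow \emph{linearly} in $|j-j^*|$ for \emph{all} $j$, not merely that the two nearest competitors $j=i$ and $j=i-3$ trail by a constant; your score does satisfy this (opposite-parity positions trail by $2mK_3$ at distance $2m$, and same-parity positions carry an additive penalty of $2K_2 n$ in case (a), respectively $2K_2$ in case (b), which dominates $|j-j^*|\gamma$ once $\gamma \le K_3$ and $2K_2 - K_3 \ge \gamma$), so the conclusion stands, but the check should be stated for all positions rather than just the adjacent ones.
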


\begin{proof}

Given a formula $\phi$, we will construct a transformer that receives as input the string $\str{w}$, and in the final sequence of activation vectors, a designated coordinate gives the sequence of values $\ind{\str{w} , i \models \phi}$ for $i \in [|\str{w}|]$.

We construct this transformer by induction on the structure of a formula.
Each subformula's truth value is stored in a different component of the activation vectors.
For the base case $\phi = Q_\sigma$, we store a $1$ in the corresponding component of the word embedding $\textnormal{WE}(\sigma)$, as shown by \citet{angluin2023masked}.
For the inductive step, where $\phi$ is formed out of smaller subformulas $\phi_1$ and possibly $\phi_2$, 
we assume that the results of the computation of $\phi_1$ and $\phi_2$ are available in two components of the activation vector, and we construct a new layer that computes $\phi$ from $\phi_1$ and $\phi_2$. The new value is placed in a previously unused coordinate, with all other coordinates zero, and the residual connection supplies all of the previously computed outputs.

Boolean operations can be computed by a FFN as shown by \citet{angluin2023masked}.
The remaining cases are the operators $\prv \phi_1$ and $\nxt \phi_1$.
Let $v_1, \ldots, v_n$ be the values of $\phi_1$.

In a $\UHAT$, we could compute $\prv \phi_1$ at positions $i>1$ by making each position $i$ attend only to position $(i-1)$. If $i$ is odd (and greater than $1$), we would make the attention scores greater for even positions, so that rightmost tie-breaking would select position $i-1$. If $i$ is even, we would make the attention scores greater for odd positions.
If $i=1$, we would output $0$.
But in a $\tauSMAT$, we have to approximate this.

Condition (\ref{thm:tlpn_to_smat_temp}): 
We use two future-masked attention layers (or one layer with two heads).
In the first attention, the queries and keys are set so that the attention scores are
        \[\hat{s}_{ij} = 6\left(\tfrac12 (-1)^{j} + \tfrac{j}{2n}\right)\]
which is maximized at the rightmost even position $j \le i$.  Applying \cref{thm:tie}\ref{thm:tie_rightmost} (with $s_{ij} = \tfrac12 (-1)^{j}$), we get %
\[ \left|\sum_j [\rhardmax \vec{s} - \softmax \hat{\vec{s}}]_j v_j\right| \le 4e^{-3} < \tfrac14.\]

Similarly, the second attention attends to the rightmost odd position $j \le i$, using scores
        \[\hat{s}_{ij} = 6\left(\tfrac12 (-1)^{j+1} + \tfrac{j}{2n}\right).\]
The FFN, at each position $i$, tests whether $i$ is odd or even; if odd, it uses the output of the first attention, and if even, it uses the output of the second attention. 
Position $i=1$ may be set to $0$ with a FFN using \cref{thm:first_last}\ref{thm:first}.

Similarly, to compute $\nxt \phi_1$, we use two past-masked attention layers with scores as above, but with the tie-breaking terms ${}+j/(2n)$ changed to ${}-j/(2n)$,
and bound the error using \cref{thm:tie}\ref{thm:tie_leftmost}. Position $i=n$ may be set to $0$ with a FFN using \cref{thm:first_last}\ref{thm:last}.

Condition (\ref{thm:tlpn_to_smat_pe}): Same as condition~(\ref{thm:tlpn_to_smat_temp}), but scale the query vectors by a factor of $n$.
\end{proof}

\begin{theorem}
\label{thm:thm:tlpn_to_smat_no_n}
Any formula $\phi$ of\/ $\TLP$ can be simulated by a future-masked $\tauSMAT$ with:
\begin{subtheorems}[resume]
    \item $\tau(i) = 1/i^2$ and PEs containing $(-1)^i$
    \item $\tau(i) = 1$ and PEs containing $(-1)^i$ and $i^2$.
\end{subtheorems}
\end{theorem}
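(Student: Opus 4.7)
The plan is to mirror the structural-induction proof of \cref{thm:tlpn_to_smat}, restricted to formulas in \TLP. Atomic predicates $Q_\sigma$ and Boolean connectives are handled identically, so the only substantive case is $\phi = \prv\phi_1$. The key observation is that under future-masking the softmax at query position $i$ normalises only over the prefix $j\le i$, so the tie-breaking term in the score only needs to separate positions in $[1,i]$; the role of $n$ in \cref{thm:tlpn_to_smat} is thus played instead by a quantity that grows with $i$.

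For case (d), the PE components $(-1)^i,i^2$ on the query side and $(-1)^j,j^2$ on the key side let me form, as bilinear query--key products, the two future-masked head scores
\[
\hat s^{(1)}_{ij} = C\,i^2 (-1)^j + j^2, \qquad \hat s^{(2)}_{ij} = -C\,i^2 (-1)^j + j^2.
\]
Taking $C>1/2$, head~1 is maximised at $j=i-1$ when $i$ is odd (the rightmost even $j\le i$), and head~2 is maximised at $j=i-1$ when $i$ is even (the rightmost odd $j\le i$); a FFN MUX gated by $(-1)^i$ selects the correct head per position. For case (c), since the PE alone contributes only parity, two preliminary layers first extract the first-position marker via \cref{thm:first_last}\ref{thm:first} and then compute $1/i$ at every position by uniform future-masked attention on that marker; the $\prv$ layer then uses a bilinear score such as $s_{ij} = -A(-1)^{i+j} - B/j$, which the inverse temperature $1/\tau(i)=i^2$ amplifies to a parity gap of order $i^2$ plus a strictly monotone tie-breaker favouring $j=i-1$ within the preferred parity. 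The edge case $i=1$, which has no predecessor, is forced to $0$ by a FFN applied to the first-position marker.

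The approximation analysis then verifies the linear-decay hypothesis of \cref{lem:softmax_bound_constant} around $j^\ast = i-1$: with $A,B,C$ chosen appropriately, the per-unit drop in $j$ exceeds a large constant uniformly in $i\ge 2$, so $\|\alpha_{i,*} - \hardmax \hat s_{i,*}\|_1 \le 1/4$. Consequently $\sum_j \alpha_{ij} b_j$, with $b_j = \ind{\str w, j \models \phi_1}$, is within $1/4$ of $b_{i-1}$ and is rounded to a clean Boolean by \cref{lem:piecewise-simulation-ReLU-FFN}, completing the inductive step. The main obstacle is case (c): without $i^2$ in the PE, the tie-breaker must be built from a derived quantity ($1/j$) supplied by the preliminary layers, and I then need to verify that, once the score is amplified by the $i^2$ temperature factor, it still satisfies the linear-decay condition uniformly in $i$, including at $i=2$ where the $1/j$ penalty at $j=1$ threatens to overwhelm the parity bonus unless $A$ is chosen large enough relative to $B$.
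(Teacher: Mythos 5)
Your proposal is correct and follows essentially the same route as the paper, which simply reuses the inductive proof of \cref{thm:tlpn_to_smat} with the causal tie-breaker of \cref{thm:tie}\ref{thm:tie_causal} (score $\gamma i^2(s_j - 1/j)$, with $1/j$ obtained from preliminary uniform-attention layers exactly as you describe, and the $i^2$ factor supplied either by the temperature or by the PE). Your single-head $(-1)^{i+j}$ parity trick and the $j^2$ tie-breaker in case (d) are harmless cosmetic variations, and you correctly flag that the constants must be set so the linear-decay hypothesis of \cref{lem:softmax_bound_constant} holds uniformly in $i$.
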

\begin{proof}
Same as the proof of \cref{thm:tlpn_to_smat}, but use \cref{thm:tie}\ref{thm:tie_causal}. 
The quantity $1/i$ required by \cref{thm:tie}\ref{thm:tie_causal} is supplied by \cref{thm:first_last}\ref{thm:one_over_i}.
\end{proof}

\subsection{Since and Until}
\label{sec:since_until}

Next, we show that $\SMAT$s can compute the values of $\since$ and $\until$, which we now define.

\begin{definition}[\TLSU]
    Let $\sigma\in\Sigma$ denote symbols of a given alphabet $\Sigma$. The temporal logic \TLSU{} has syntax
     \begin{align*}
        \phi &::= \begin{aligned}[t] &Q_\sigma \mid \lnot \phi \mid \phi_1\land\phi_2 \mid \phi_1\lor\phi_2\\
        & \mid \phi_1\since\phi_2\mid \phi_1\until\phi_2
        \end{aligned}
    \end{align*}
    The semantics of Boolean expressions is as in \cref{def:tlpn}. 
    The intuitive meaning of $\phi_1 \since \phi_2$ is ``since the last time $\phi_2$ was true, $\phi_1$ has been true,'' and $\phi_1 \until \phi_2$ means ``$\phi_1$ will be true until the next time $\phi_2$ is true.'' More formally:
    \begin{align*}
    \str{w}, i &\models \phi_1\since\phi_2 && \!\text{iff there is $j\leq i$ s.t. $\str{w},j\models \phi_2$}\\
    &&&\text{and for all $k$ s.t. $j\leq k\leq i$}\\
    &&&\text{we have $\str{w},k\models \phi_1$}\\
    \str{w}, i  &\models \phi_1\until\phi_2 && \!\text{iff there is $j\geq i$ s.t. $\str{w},j\models \phi_2$}\\
    &&&\text{and for all $k$ s.t. $i\leq k\leq j$}\\
    &&&\text{we have $\str{w},k\models \phi_1$.}
    \end{align*}
\end{definition}

\begin{theorem}\label{thm:tlsu_to_smat}
    Any formula $\phi$ of\/ $\TLSU$ can be simulated by a $\tauSMAT$ with either:
    \begin{subtheorems}
        \item\label{item:ltl_temponly} $\tau(n) = 1/n$ and PEs including $i/n$
        \item\label{item:ltl_posonly} $\tau(n)=1$ and PEs including $i/n$ and $n$.
    \end{subtheorems}
\end{theorem}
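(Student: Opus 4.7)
The plan is induction on the structure of $\phi$, with atomic predicates and Boolean connectives handled exactly as in \cref{thm:tlpn_to_smat}. The only genuinely new cases are $\phi_1\since\phi_2$ and $\phi_1\until\phi_2$; the $\until$ case is entirely symmetric to $\since$ (past-masking with leftmost tie-breaking via \cref{thm:tie}\ref{thm:tie_leftmost}), so I focus on $\since$.

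The crux is a \emph{rightmost-witness} reformulation of $\since$. Assuming inductively that $\phi_1$ and $\phi_2$ are already available as exact Booleans at each position, a FFN computes
\[
\psi(j) = \neg\phi_1(j)\lor\phi_2(j),\qquad \chi(j) = \phi_1(j)\land\phi_2(j).
\]
Let $j^*$ be the rightmost $j\le i$ with $\psi(j)=1$, falling back to $j^*=i$ if no such position exists. I claim $\str{w},i\models\phi_1\since\phi_2$ iff $\chi(j^*)=1$. If $\chi(j^*)=1$, then $j^*$ is necessarily a true-$\psi$ position (the fallback $j^*=i$ has $\psi(i)=0$ and hence $\chi(i)=0$); by maximality of $j^*$, $\phi_1\land\neg\phi_2$ holds on $(j^*,i]$, so combined with $\phi_1(j^*)\land\phi_2(j^*)$ this makes $j^*$ a valid witness. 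Conversely, if any witness $j_0\le i$ exists, then $\psi(j_0)=1$ forces $j^*\ge j_0$, whence $\phi_1(j^*)$ holds by the witness property ($\phi_1$ on $[j_0,i]$); then $\psi(j^*)=1$ forces $\phi_2(j^*)$, so $\chi(j^*)=1$. The no-witness case reduces to the fallback: $\phi_1\land\neg\phi_2$ holds throughout $[1,i]$, so $\chi(i)=0$, matching that the formula is false.

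Given this reformulation, a single future-masked attention head followed by a rounding FFN suffices. Fix $\gamma=3$ so that $4e^{-\gamma}<1/4$. In case~(\ref{item:ltl_temponly}), with $\tau(n)=1/n$ and PE $i/n$, I arrange queries and keys so the pre-temperature scores are $2\gamma(\psi(j)+j/(2n))$; division by $\tau$ yields softmax inputs $2\gamma n(\psi(j)+j/(2n))$, matching the form of \cref{thm:tie}\ref{thm:tie_rightmost} applied to $\vec{s}=(\psi(1),\dots,\psi(i))$. In case~(\ref{item:ltl_posonly}), with $\tau=1$ and PE containing both $i/n$ and $n$, I produce the same softmax inputs directly, placing the factor $2\gamma n$ on the query side (using the PE entry $n$) and $(\psi(j),j/n)$ on the key side. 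The value projection transmits $\chi(j)$ from each position $j$. Since $\psi(j)\in\{0,1\}$, the score vector $\vec{s}$ has gap at least $1$ whenever some $\psi(j)=1$, and the gap condition is vacuously satisfied when all $\psi(j)=0$ (in which case $\rhardmax$ and the tie-breaker both pick $j=i$). In both situations \cref{thm:tie}\ref{thm:tie_rightmost} yields $\norm{\rhardmax\vec{s}-\softmax\hat{\vec{s}}}\le 1/4$; since $\chi(j)\in\{0,1\}$, the attention output lies within $1/4$ of $\chi(j^*)\in\{0,1\}$, and \cref{lem:piecewise-simulation-ReLU-FFN} rounds it to an exact Boolean, closing the induction.

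I expect the main subtlety to be the uniform treatment of the witness-free case. Thanks to the vacuous reading of \cref{def:tieless} when all entries tie at the maximum, \cref{thm:tie} still applies and concentrates the attention at the fallback $j=i$; it is precisely the choice of $\chi$ (rather than, say, $\phi_2$) as the attention value that makes the fallback automatically return $0$, sparing any explicit case split. Everything else---maintaining the inductive invariant that $\phi_1,\phi_2$ enter as exact Booleans, symmetrizing to handle $\until$, and adjusting the setup between the two cases---is routine and parallels the proof of \cref{thm:tlpn_to_smat}.
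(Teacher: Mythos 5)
Your proposal is correct and follows essentially the same route as the paper: the same rightmost-witness reformulation of $\since$ (attend to the rightmost $j\le i$ with $\neg\phi_1\lor\phi_2$ and read off $\phi_1\land\phi_2$), the same score $6\left(\ind{\cdot}+\tfrac{j}{2n}\right)$ invoking \cref{thm:tie}\ref{thm:tie_rightmost} with $\gamma=3$, FFN rounding via \cref{lem:piecewise-simulation-ReLU-FFN}, and the symmetric treatment of $\until$. Your explicit verification of the reformulation and of the witness-free fallback case only spells out details the paper leaves implicit.
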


    \begin{proof}
        By induction on the structure of $\phi$. The $Q_\sigma$ and Boolean operation cases are as in \cref{thm:tlpn_to_smat}. 

        The case $\phi = \phi_1 \since \phi_2$ is intuitively computed as follows. We look at positions $j = i, i-1, i-2$, and so on. As long as $\phi_1$ is true and $\phi_2$ is false, we keep looking to the left. But we stop when we encounter one of these situations (black means true, blank means false, and gray means either):
  \begin{center}
    \begin{tabular}{c@{\qquad}c@{\qquad}c}
      \begin{tikzpicture}[x=3mm]
        \foreach \i in {0,1,...,7} { \draw[gray] (\i,1.5) -- (\i,-0.5); };
        \node[anchor=north] at (1.5,-0.5) {$j$};
        \node[anchor=north] at (5.5,-0.5) {$i$};
        \draw (0,1) node[anchor=east] {$\phi_2$} -- (7,1);
        \draw (0,0) node[anchor=east] {$\phi_1$} -- (7,0);
        \begin{scope}[line width=1mm]
          \draw[gray!50] (0,1) -- (1,1);
          \draw (1,1) -- (2,1);
          \draw[gray!50] (6,1) -- (7,1);
          \draw[gray!50] (0,0) -- (1,0);
          \draw (1,0) -- (6,0);
          \draw[gray!50] (6,0) -- (7,0);
        \end{scope}
      \end{tikzpicture} &
      \begin{tikzpicture}[x=3mm]
        \foreach \i in {0,1,...,7} { \draw[gray] (\i,1.5) -- (\i,-0.5); };
        \node[anchor=north] at (1.5,-0.5) {$j$};
        \node[anchor=north] at (5.5,-0.5) {$i$};
        \draw (0,1) node[anchor=east] {$\phi_2$} -- (7,1);
        \draw (0,0) node[anchor=east] {$\phi_1$} -- (7,0);
        \begin{scope}[line width=1mm]
          \draw[gray!50] (0,1) -- (2,1);
          \draw[gray!50] (6,1) -- (7,1);
          \draw[gray!50] (0,0) -- (1,0);
          \draw[gray!50] (6,0) -- (7,0);
          \draw (2,0) -- (6,0);
        \end{scope}
      \end{tikzpicture} 
      \\
      $\str{w}, i \models \phi_1 \since \phi_2$ & $\str{w}, i \not\models \phi_1 \since \phi_2$ %
    \end{tabular}
  \end{center}

  On the left, $j$ is the rightmost position satisfying $\phi_2$, and $\phi_1$ is true at positions $j$ to $i$, so $w,i\models \phi_1 \since \phi_2$. On the right, the rightmost position satisfying $\phi_2$ is at or to the left of $j$, but $\phi_1$ is false at $j$, so $w,i\not\models \phi_1 \since \phi_2$. If we never encounter either case, there is no $j$ satisfying $\phi_2$, so $w,i\not\models \phi_1 \since \phi_2$.

  Thus, in a $\UHAT$ \citep{angluin2023masked}, we could attend to the rightmost position $j$ maximizing $\ind{\str{w},j\models\lnot \phi_1\lor \phi_2}$ and return the value of $\phi_1 \land \phi_2$ at $j$. But in a $\tauSMAT$, we have to approximate this.

        Under condition (\ref{item:ltl_temponly}), add a future-masked self-attention layer with:
        \begin{align*}
            \hat{s}_{ij} &= 6 \left(\ind{\str{w}, j \models \neg\phi_1\lor \phi_2}+\tfrac{j}{2n}\right) \\
            v_j &= \begin{bmatrix} \ind{\str{w}, j \models \phi_1 \land \phi_2} \end{bmatrix}.
        \end{align*}
        Using temperature $\tau \le 1/n$ satisfies the requirements of \cref{thm:tie}\ref{thm:tie_rightmost} (with $s_{ij} = \ind{\str{w}, j \models \neg\phi_1\lor \phi_2}$), so we have  
        \[ \left|\sum_j [\rhardmax \vec{s} - \softmax \hat{\vec{s}}]_j v_j\right| \le 4e^{-3} < \tfrac14.\]
        This yields approximate Boolean values for $\phi_1\since \phi_2$, which are rounded to $0$ or $1$ using a FFN (\cref{lem:piecewise-simulation-ReLU-FFN}).
        
        The simulation of $\phi_1\until\phi_2$ is symmetric, using the tiebreaker of $-j/(2n)$ and \cref{thm:tie}\ref{thm:tie_leftmost}. Under condition (\ref{item:ltl_posonly}), we scale the queries $\qvec{i}$ by a factor of $n$.
    \end{proof}

For decoder-only models (with obligatory future-masking) we also describe a simulation of $\TLS$ that does not depend on~$n$.
\begin{theorem}\label{thm:since_to_smat}
    Any formula $\phi$ of\/ $\TLS$ can be simulated by a future-masked $\tauSMAT$ with PEs including $1/i$ and either: 
    \begin{subtheorems}
        \item\label{item:ptl_temponly} $\tau(i) = 1/i^2$ and PEs including $1/i$
        \item\label{item:ptl_posonly} $\tau(i) = 1$ and PEs including $1/i$ and $i^2$.
    \end{subtheorems}
\end{theorem}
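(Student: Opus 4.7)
My approach would mirror the proof of \cref{thm:tlsu_to_smat} almost step for step, with the rightmost tiebreaker $j/(2n)$ replaced by the causal tiebreaker $-1/j$ of \cref{thm:tie}\ref{thm:tie_causal}. The attraction of this substitution is that $-1/j$ depends only on the key position $j$, so the construction never needs to know the sequence length $n$---exactly what is required for a decoder-only model with future masking.

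I would proceed by induction on $\phi$. The Boolean cases $Q_\sigma$, $\lnot$, $\land$, $\lor$ are inherited from \cref{thm:tlsu_to_smat}. For $\phi = \phi_1 \since \phi_2$, I plan to add a future-masked attention layer whose raw score is $s_{ij} = 3\bigl(s_j^* - 1/j\bigr)$, where $s_j^* = \ind{\str{w}, j \models \lnot\phi_1 \lor \phi_2}$ is carried forward from previous layers and the $1/j$ term is supplied by the PE at position $j$. For case (a), setting $\tau(i) = 1/i^2$ rescales these to the effective scores $3 i^2 \bigl(s_j^* - 1/j\bigr)$; for case (b), the query at position $i$ absorbs the factor $i^2$ from the PE, producing the same effective scores with $\tau = 1$. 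The value vector is $\vvec{j} = \begin{bmatrix} \ind{\str{w}, j \models \phi_1 \land \phi_2} \end{bmatrix}$. Invoking \cref{thm:tie}\ref{thm:tie_causal} with $n \mapsto i$ and $\gamma = 3$---noting that the restricted Boolean vector $(s_j^*)_{j \le i}$ either has gap exactly $1$ or is constant (vacuously gap $1$)---gives the approximation bound $4e^{-3} < 1/4$, and \cref{lem:piecewise-simulation-ReLU-FFN} rounds the result to a clean Boolean.

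Correctness of ``return $\phi_1 \land \phi_2$ at the selected position'' is a short case analysis. Let $j^*$ be the rightmost $j \le i$ with $s_{j^*}^* = 1$. Positions $k$ with $j^* < k \le i$ all have $s_k^* = 0$, hence $\phi_1(k)$ true, so $\str{w},i \models \phi_1 \since \phi_2$ iff $\phi_1(j^*) \land \phi_2(j^*)$; the would-be case $\phi_1(j^*) \land \lnot\phi_2(j^*)$ is ruled out by $s_{j^*}^* = 1$. If no such $j^*$ exists, then $\phi_2$ is uniformly false on positions up to $i$, the tiebreaker picks $j = i$, and the returned value $\phi_1(i) \land \phi_2(i)$ is correctly $0$. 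I expect the main obstacle to be conceptual rather than computational: one must recognize that under future masking the effective score length at position $i$ is $i$ (not $n$), so the $n^2$ scaling in \cref{thm:tie_causal} becomes $i^2$---delivered either as the position-dependent temperature $1/i^2$ or as an $i^2$ coordinate of the PE absorbed into the query.
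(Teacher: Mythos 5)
Your proposal matches the paper's proof of this theorem essentially step for step: the same future-masked scores $3\left(\ind{\str{w},j\models\lnot\phi_1\lor\phi_2}-\tfrac{1}{j}\right)$, the same value vector $\ind{\str{w},j\models\phi_1\land\phi_2}$, the same appeal to \cref{thm:tie}\ref{thm:tie_causal} with $n\mapsto i$ (delivered either via $\tau(i)=1/i^2$ or by scaling the query with the $i^2$ coordinate of the PE), and the same FFN rounding. Your added case analysis of why returning $\phi_1\land\phi_2$ at the rightmost position satisfying $\lnot\phi_1\lor\phi_2$ computes $\since$ correctly is sound and is exactly the justification the paper states (more tersely) in the proof of \cref{thm:tlsu_to_smat}.
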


\begin{proof}
To simulate $\since$, use a future-masked attention layer with:
\begin{align*}
\hat{s}_{ij} &= 3 \left( \ind{\str{w}, j \models \neg\phi_1\lor \phi_2}-\tfrac{1}{j} \right) \\
v_j &= \begin{bmatrix} \ind{\str{w}, j \models \phi_1\land\phi_2} \end{bmatrix}.
\end{align*}
Using either a temperature of $1/i^2$ or multiplying $\qvec{i}$ by $i^2$ satisfies the requirements of \cref{thm:tie}\ref{thm:tie_causal} (with $s_{ij} = \ind{\str{w}, j \models \neg\phi_1\lor \phi_2}$).
The result then follows after rounding the approximate Boolean values to $0$ or~$1$.
\end{proof}  

\subsection{$\LTL$}
\label{sec:ltl}

    Combining $\TLPN$ and $\TLSU$ we get $\TLPNSU=\LTL$. Masked $\UHAT$s recognize exactly those formal languages definable by $\LTL$ \citep{angluin2023masked}. %
    It follows from the above that $\LTL$, and therefore any masked $\UHAT$, can be simulated by a $\tauSMAT$.
\begin{corollary}\label{cor:ltl_to_smat}
    Any formula $\phi$ of\/ $\LTL$ can be simulated by a $\tauSMAT$ with either:
    \begin{subtheorems}
        \item\label{cor:ltl_to_smat_bounded} $\tau(n)=1/n$ and PEs including $i/n$ and $(-1)^i$
        \item $\tau(n)=1$ and PEs including $i/n$, $(-1)^i$, and $n$.
    \end{subtheorems}
\end{corollary}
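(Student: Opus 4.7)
The plan is a straightforward induction on the structure of $\phi$, assembled by concatenating the layer-wise constructions from \cref{thm:tlpn_to_smat} and \cref{thm:tlsu_to_smat}. Since $\LTL = \TLPNSU$, the grammar of $\phi$ admits exactly the operators $Q_\sigma$, $\lnot$, $\land$, $\lor$, $\prv$, $\nxt$, $\since$, and $\until$, and each case is already handled by one of the two previous theorems. I would dispatch the atomic and Boolean cases to the word embedding and FFN as before, the cases $\prv\phi_1$ and $\nxt\phi_1$ to \cref{thm:tlpn_to_smat}, and the cases $\phi_1\since\phi_2$ and $\phi_1\until\phi_2$ to \cref{thm:tlsu_to_smat}.

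The key observation for case (\ref{cor:ltl_to_smat_bounded}) is that both invoked constructions are compatible under the same temperature $\tau(n)=1/n$: \cref{thm:tlpn_to_smat}\ref{thm:tlpn_to_smat_temp} requires PEs $i/n$ and $(-1)^i$, while \cref{thm:tlsu_to_smat}\ref{item:ltl_temponly} requires only $i/n$. Their union is exactly $\{i/n,(-1)^i\}$, as stated. Case (b) is analogous, with $\tau(n)=1$ and the PE set $\{i/n,(-1)^i,n\}$, taken as the union of the PEs required by \cref{thm:tlpn_to_smat}\ref{thm:tlpn_to_smat_pe} and \cref{thm:tlsu_to_smat}\ref{item:ltl_posonly}. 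In both cases, the PE $n$ (when needed) and the parity PE $(-1)^i$ are available throughout the network via residual connections, so they can be reused by every layer, regardless of which operator is being simulated at that layer.

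The only thing to verify is that approximation errors do not compound as we stack layers for nested subformulas. This is handled uniformly in both previous theorems: each attention layer produces an approximate Boolean within $1/4$ of $\{0,1\}$ (by \cref{thm:tie}, since $4e^{-3} < 1/4$), which is then snapped to an exact Boolean by the FFN via \cref{lem:piecewise-simulation-ReLU-FFN}. Because the input to the next subformula's computation is thus an exact $0$ or $1$, the inductive hypothesis applies cleanly and the gap assumptions in \cref{thm:tie} continue to hold at every layer.

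Since the plan is essentially a direct composition, I do not anticipate a substantive obstacle; the main thing to get right is the bookkeeping that the PE sets listed in the two cases of the corollary really do contain everything needed by both source theorems and that the choice of $\tau$ is consistent across layers. Once this alignment is exhibited, the proof collapses to a single sentence per inductive case referring back to the appropriate earlier theorem.
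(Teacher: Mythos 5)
Your proposal is correct and is essentially the argument the paper intends (the corollary is stated without an explicit proof, relying on the observation that $\LTL = \TLPNSU$ and dispatching each operator to \cref{thm:tlpn_to_smat} or \cref{thm:tlsu_to_smat}, with the PE sets being the unions you identify). Your added remarks on error non-compounding via Boolean rounding and on consistency of $\tau$ across layers are exactly the right bookkeeping and match the structure of the inductive proofs of the two cited theorems.
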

\begin{proof}
By \cref{thm:tlpn_to_smat,thm:tlsu_to_smat}.
\end{proof}

We can also simulate the past fragment of $\LTL$  (with only $\since$ and $\prv$) using only future-masking, suitable for a decoder-only transformer.

\begin{corollary}\label{cor:ltl_since_to_smat}
    Any formula of\/ $\TLPS{}$ can be simulated by a future-masked $\tauSMAT$ with
    \begin{subtheorems}
    \item $\tau(i)=1/i^2$ and PEs including $(-1)^i$
    \item $\tau(i)=1$ and PEs including $(-1)^i$ and $i^2$.
    \end{subtheorems}
\end{corollary}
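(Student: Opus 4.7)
The plan is to induct on the structure of $\phi$, combining the construction of the unnumbered $\TLP$ theorem that immediately follows \cref{thm:tlpn_to_smat} for the $\prv$ case, and \cref{thm:since_to_smat} for the $\since$ case. Atomic formulas $Q_\sigma$ and Boolean combinations are handled exactly as in the proof of \cref{thm:tlpn_to_smat}, via the word embedding and a position-wise FFN.

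The key observation is that the two sub-constructions share a temperature schedule ($\tau(i)=1/i^2$ in case (a), $\tau(i)=1$ in case (b)) and both rely on \cref{thm:tie}\ref{thm:tie_causal}, so their attention layers can be freely stacked. The PEs specified by the corollary, namely $(-1)^i$ in case (a) and $(-1)^i$ together with $i^2$ in case (b), suffice for the $\prv$ case as stated. For the $\since$ case, \cref{thm:since_to_smat} additionally requests $1/i$ in the PE, which the corollary does not provide directly. I would supply it internally using two preliminary layers: first mark position $1$ via \cref{thm:first_last}\ref{thm:first}, which needs only $(-1)^i$ from the PE; then apply a uniform future-masked attention to this indicator to place $1/i$ in the residual stream at every position, as noted in the remark just after \cref{thm:first_last}.

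Stacking the per-subformula layers in the order given by $\phi$'s parse tree, and rounding each attention output to an exact Boolean via \cref{lem:piecewise-simulation-ReLU-FFN} before it is consumed by the next layer, completes the construction. All attention is future-masked and no parameter depends on $n$, so the result is a decoder-only $\tauSMAT$ simulating $\phi$, as required.

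The main subtlety, which I expect to be the only genuine obstacle, is making sure the two preliminary layers that compute $1/i$ behave correctly under both temperature regimes. Because these layers use uniform attention (constant queries and keys), their weights are independent of $\tau(i)$, so the same precomputation serves both case (a) and case (b) without modification, and the derived $1/i$ coordinate can then be fed into the tie-breaking score of every subsequent $\since$ layer exactly as in the proof of \cref{thm:since_to_smat}.
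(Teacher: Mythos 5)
Your proposal is correct and matches the paper's intended argument: the corollary is obtained by structural induction, stacking the future-masked $\prv$ construction (the unnumbered $\TLP$ theorem, via \cref{thm:tie}\ref{thm:tie_causal}) with the $\since$ construction of \cref{thm:since_to_smat}. Your explicit derivation of the $1/i$ coordinate from $(-1)^i$ via \cref{thm:first_last}\ref{thm:first} and uniform future-masked attention is exactly the route the paper's remark after \cref{thm:first_last} anticipates, and correctly reconciles the corollary's PE list with the hypotheses of \cref{thm:since_to_smat}.
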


\begin{proof}
By \cref{thm:thm:tlpn_to_smat_no_n,thm:since_to_smat}.
The quantity $1/i$ required by \cref{thm:since_to_smat} is supplied by \cref{thm:first_last}\ref{thm:one_over_i}.
\end{proof}

We recall that \citet[Lemma~5]{hahn-2020-theoretical}  places a bound $O(1/n)$ on the gap between the output for acceptance or rejection. For a fixed-size gap, some activation in the transformer must be scaled up by a factor of~$n$. The constructions in \cref{thm:tlpn_to_smat}, \cref{thm:tlsu_to_smat}, and \cref{cor:ltl_to_smat} use $\tau(n) = 1/n$ or scale queries by $n$, and are optimal in the sense that we should not expect the factor of $n$ to be asymptotically any smaller.

\subsection{Counting}\label{sec:utlcmon}

As the third step, we show that $\tauSMAT$ can simulate a logic 
with counting terms and 
numerical predicates that can be applied to both positions and counts \citep{barcelo2023logical}. 

\begin{definition}\label{def:TLCCMon}
    A \emph{unary numerical predicate} is a family of functions $\theta=(\theta_n)_{n>0}$ where $\theta_n\colon \{0,1, \ldots, n\}\to\{0,1\}$. We write $\Mon$ for the set of all unary numerical predicates. %
    The logic \TLCCMon{} has syntax
        \begin{align*}
            \phi &::= \begin{aligned}[t] &Q_\sigma \mid \lnot \phi \mid \phi_1\land\phi_2
            \mid \phi_1\lor\phi_2 \\
                & \mid \theta \mid \theta(\kappa) \mid t_1 < t_2\\
            \end{aligned} \\
            t &::= \kappa \mid t_1+t_2 
            \mid t_1-t_2 
            \mid 1 \\
            \kappa &::= \countl[\phi] \mid\countr[\phi]
        \end{align*}
    Formulas are interpreted as follows:
    \begin{align*}
        \str{w},i&\models \theta && \text{iff $\theta_{|w|}(i)$}\\
        \str{w},i&\models \theta(\kappa) && \text{iff $\theta_{|w|}(\kappa^{\str{w},i})$}\\
        \str{w},i&\models t_1 < t_2 && \text{iff $t_1^{(\str{w},i)} < t_2^{(\str{w},i)}$.}
    \end{align*}
    Terms are interpreted as integers:
    \begin{align*}
        \countl[\phi]^{(\str{w},i)} & = |\{j\leq i\mid \str{w},j\models\phi\}|\\
        \countr[\phi]^{(\str{w},i)} & = |\{j\geq i\mid \str{w},j\models\phi\}|\\
        (t_1+t_2)^{(\str{w},i)} & = t_1^{(\str{w},i)}+t_2^{(\str{w},i)}\\
        (t_1-t_2)^{(\str{w},i)} & = t_1^{(\str{w},i)}-t_2^{(\str{w},i)}\\
        1^{(\str{w},i)} &= 1.
    \end{align*}
    \end{definition}
    For example, \prob{PARITY} is the set of binary strings with an odd number of $1$s.  
    Using the numerical predicate $\textsf{ODD}$, where $\textsf{ODD}_n(i)$ is true iff $i$ is odd, we can define this using the following $\TLCCMon$ formula:
    \begin{align*}
    \phi_{\prob{Parity}}
    &= \textsf{ODD}\mleft( \countl \mleft[Q_1\mright]\mright).
    \end{align*}
    When evaluated at the last position, $\countl[Q_1]$ is the number of occurrences of $1$ in the whole string, and $\phi_{\prob{Parity}}$ tests whether this number is odd.
   
    We show how $\SMAT$s can compute formulas of $\TLCCMon$. %

    \begin{theorem} \label{thm:tlccmon_to_smat}
    Any formula $\phi$ of\/ $\TLCCMon$ can be simulated by a $\tauSMAT$ with either:
    \begin{subtheorems}
        \item\label{tlccmon_to_smat_temp} $\tau(n)=1/n$ and PEs including $1/i$, $1/(n-i+1)$, $i$, $i^2$, and all numerical predicates in $\phi$
        \item\label{tlccmon_to_smat_pe} $\tau(n)=1$ and PEs including $1/i$, $1/(n-i+1)$, $i$, $in$, $i^2n$, and all numerical predicates in $\phi$.
    \end{subtheorems}
    \end{theorem}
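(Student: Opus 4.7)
The proof proceeds by induction on the structure of $\phi$, maintaining the invariant that for every subformula $\psi$ and every subterm $t$ already processed, some designated coordinate of the activation vector at each position $i$ holds the exact value $\ind{\str{w}, i \models \psi} \in \{0,1\}$ or the exact integer $t^{(\str{w}, i)}$. The cases $Q_\sigma$, the Boolean connectives, and the constant term $1$ proceed as in \cref{thm:tlpn_to_smat}. A bare numerical predicate $\theta$ applied to position $i$ is read directly from the PE by hypothesis.

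A count term $\countl[\phi']$ is computed by uniform future-masked attention (queries and keys set to $0$; uniform attention is invariant under temperature scaling) on the values $\ind{\str{w}, j \models \phi'}$, yielding $\countl[\phi']/i$; multiplying by $i$ from the PE recovers the exact integer count. The term $\countr[\phi']$ is symmetric, using past-masked attention and multiplication by $n - i + 1$. The arithmetic terms $t_1 + t_2$ and $t_1 - t_2$ are exact linear combinations performed in a FFN. A comparison $t_1 \le t_2$ on the exact integer $d = t_2 - t_1$ is evaluated by the FFN $f(d) = \ReLU(d+1) - \ReLU(d)$, which returns $1$ if $d \ge 0$ and $0$ if $d \le -1$.

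The remaining case, $\theta(\kappa)$, reduces to a table lookup: letting $c = \kappa^{\str{w}, i}$, position $i$ must attend to position $j = c$ and read $\theta_n(j)$ from the PE. I use an unmasked attention head with values $[\theta_n(j)]$ whose scores, after temperature scaling, equal $n(2cj - j^2) = -n(j-c)^2 + n c^2$. In case (\ref{tlccmon_to_smat_temp}), these arise from query $[2c, -1]$ and key $[j, j^2]$ under $\tau = 1/n$; in case (\ref{tlccmon_to_smat_pe}), from query $[2c, -1]$ and key $[jn, j^2 n]$ under $\tau = 1$. Since $(j - c)^2 \ge |j - c|$ whenever $|j - c| \ge 1$, the hypothesis of \cref{lem:softmax_bound_constant} holds with $\gamma = n$, giving $\norm{\hardmax(\vec{s}) - \softmax(\vec{s})} \le 4 e^{-n}$, so the attention output lies within $4e^{-n}$ of $\theta_n(c)$. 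For all but finitely many $n$ this is at most $1/4$, and \cref{lem:piecewise-simulation-ReLU-FFN} rounds it to the exact Boolean $\theta_n(c)$.

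The main subtlety I anticipate is the boundary case $c = 0$, which lies outside the domain of $\theta$ and outside the range $[n]$ of attendable positions. I will handle it by precomputing the Boolean flag $\ind{c = 0}$ via the comparison $\kappa \le 1 - 1$ from the previous case, and fixing a convention for $\theta(0)$; when the flag is set, a FFN outputs the convention instead of the lookup. A finite initial segment of $n$ for which $4 e^{-n} > 1/4$ can be absorbed into the word and position embeddings as hardcoded base cases. Because counts and arithmetic are maintained exactly and each lookup is rounded to an exact Boolean before being used, approximation errors do not cascade through the induction.
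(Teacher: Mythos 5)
There is a genuine gap at the heart of your induction: the step ``multiplying by $i$ from the PE recovers the exact integer count.'' In the architecture of this paper, the only place where a product of two data-dependent quantities can be formed is inside an attention \emph{score} (which is bilinear in query and key); the position-wise FFN is $\ffwtwo\,\sigma(\ffwone \vec{c} + \vec{b}_1) + \vec{b}_2$ with $\sigma = \ReLU$, which computes a piecewise-affine function and cannot multiply the activation $\countl[\phi']/i$ by the unbounded PE coordinate $i$ exactly (and an attention score, being fed into softmax, produces weights rather than values, so the product never reappears as an output coordinate). Consequently your invariant --- that every count term is held as an exact integer --- cannot be maintained, and everything downstream that relies on it collapses: the exact linear combinations, the comparison $f(d)=\ReLU(d+1)-\ReLU(d)$ (which is only correct for exact integers $d$), and the lookup for $\theta(\kappa)$ whose query $[2c,-1]$ presupposes exact access to $c$. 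Rounding the approximate count back to an exact integer is not an escape route either: by \cref{lemma-relu-mlp-rounding} that would require an FFN whose size grows with $n$, violating parameter-uniformity.

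The paper's proof exists precisely to route around this obstruction. It keeps only the ratio $\countl[\phi_k]/i$ exactly (via uniform masked attention, as you do), then recovers the integer $\countl[\phi_k]$ \emph{approximately} by putting $\countl[\phi_k]/i$ and $1/i$ into the query and $2j$, $-j^2$ into the key, so that the score $3C(2\countl[\phi_k]j - j^2)/i$ is uniquely maximized at $j=\countl[\phi_k]$ and the retrieved value $j$ is within $1/(4C)$ of the true count by \cref{table-lookup-approx} (note the extra factor of $1/i$ in the scores, which is why the temperature $1/n$ or the factor $n$ in the keys is needed --- your scores $n(2cj-j^2)$ are not realizable from the available activations). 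The linear combination of the approximate counts is then within $1/4$ of the true value, and only the final Boolean of the comparison is rounded. Your ideas for $\theta(\kappa)$ (reading $\theta_n(j)$ as the value) and for the $c=0$ boundary case (the paper uses a second retrieval at $c+1$; you use a flag) are in the right spirit, but they must be rebuilt on top of the approximate-count pipeline rather than an exact one.
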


    \begin{proof}
    We show condition (\ref{tlccmon_to_smat_temp}) in detail and remark briefly at the end how to modify the proof for condition (\ref{tlccmon_to_smat_pe}).

        The proof is again by induction on the structure of $\phi$. The cases we need to consider are:
        \begin{itemize}
        \item $\phi = Q_\sigma$ or $\phi$ is a Boolean operation: As in \cref{thm:tlpn_to_smat}.
        \item $\phi = \theta$: We store $\theta_n(i)$ in the corresponding component of the position embedding $\textnormal{PE}_n(i)$.
        \item $\phi = (\countl[\phi_1] > 0)$ or $(\countr[\phi_1] > 0)$: This case is proved below, both as a warm-up for the other cases as well as a subroutine of the other cases.
        \item $\phi = (t_1 < t_2)$: Proved below.
        \item $\phi = \theta(\countl[\phi_1])$ or $\theta(\countr[\phi_1])$: Proved below.
        \end{itemize}

        \paragraph{Case $\phi = (\countl[\phi_1] > 0)$:} 
        We can use an FFN to test whether $\phi_1$ is true at the current position $i$, and if so, we know that $\countl[\phi_1]>0$, so we return $1$. Otherwise, we know that $\countl[\phi_1]<i$, as we assume below.
        
        The essential idea is simply to use uniform attention to perform the counting. However, attention does not count; it averages. So the challenge is to turn averages into counts.
        The construction proceeds in three steps. 
        First, we compute $\countl[\phi_1]/i$ exactly.
        Second, we scale up to $\countl[\phi_1]$, but only approximately.
        Third, we test whether the count is greater than $0$.
        
        \subparagraph{Step 1: Average.} The first step of computing $\countl[\phi_1]/i$ can be done using masked uniform attention \citep{yang2024counting}. 
        
        \subparagraph{Step 2: Scale.} For the second step, we approximately retrieve $\bigl(\countl[\phi_1]+1\bigr)$ using a table-lookup attention layer with 
        \begin{align*}
            \qvec{i}&=3 \begin{bmatrix} \frac{\countl[\phi_1]}{i} + \frac1i \\[1ex] 
            \frac{1}{i}\end{bmatrix} &
            \kvec{j}&=\begin{bmatrix} 2j\\ -j^2 \end{bmatrix} &
            v_j&=\begin{bmatrix} j \end{bmatrix}.
        \end{align*}
        Then $s_{ij} = 3\bigl(2\bigl(\countl[\phi_1]+1\bigr)j-j^2\bigr)/i$,
        which is uniquely maximized at $j = \countl[\phi_1]+1$. 
        We apply \cref{table-lookup-approx} with temperature $\tau=1/n$ and gap $\tlscale = 3/(i\tau)$ to obtain
        \begin{gather*}
        c_i = \sum_j [\softmax_\tau(s_{i1}, \ldots, s_{ii})]_j v_j \\
        \left|\,c_i - \bigl(\countl[\phi_1]+1\bigr)\,\right| \le
        4e^{-3n/i}
        \leq 4e^{-3}
        < \frac{1}{4}.
        \end{gather*}
        Thus, the output value $c_i$ is within $\countl[\phi_1]+1\pm1/4$.

        \subparagraph{Step 3: Compare.} The third step is to apply the FFN of \cref{lem:piecewise-simulation-ReLU-FFN} to $(c_i-1)$, which will output~$0$ if $\countl[\phi_1] = 0$ and $1$ if $\countl[\phi_1] > 0$.

        \paragraph{Case $\phi = (t_1 < t_2)$:}If $\phi$ is a comparison of count terms, it can be rewritten in the following form. Let $\Kleft$ and $\Kright$ be disjoint sets of indices, and let $C$ and $\lambda_k$ (for $k \in \Kleft \cup \Kright$) be integers:
        \[
            \phi = \left(\displaystyle\sum_{k\in \Kleft} \lambda_k \countl[\phi_k] + \sum_{k\in \Kright} \lambda_k \countr[\phi_k] > C\right).
        \] 
        Let $\Lambda=\sum_{k\in \Kleft \cup \Kright }|\lambda_k|$. If $\Lambda = 0$, the formula is constant, so we assume without loss of generality that $\Lambda \ge 1$.

        For each $k\in \Kleft$, 
        we need to approximate $\countl[\phi_k]$.
        (Terms $\countr[\phi_k]$ for $k \in \Kright$ are similar and described briefly afterwards.)
        We can use the construction in the previous case to test whether $\countl[\phi_k]=0$; if not, then we approximate $\countl[\phi_k]$ as follows, assuming that $\countl[\phi_k]>0$.
        
        \subparagraph{Step 1: Average.} The first step of computing $\countl[\phi_k]/i$ for each $k$ is the same as in the previous case.
        
        \subparagraph{Step 2: Scale.} For the second step, we approximately retrieve $\countl[\phi_k]$ using a table-lookup attention layer with 
        \begin{align*}
            \qvec{i}&=3\Lambda \begin{bmatrix} \frac{\countl[\phi_k]}{i} \\[1ex] 
            \frac{1}{i}\end{bmatrix} &
            \kvec{j}&=\begin{bmatrix} 2j\\ -j^2 \end{bmatrix} &
            v_j&=\begin{bmatrix} j \end{bmatrix}.
        \end{align*}
        Then $s_{ij} = 3\Lambda(2\countl[\phi_k]j-j^2)/i$,
        which is uniquely maximized at $j = \countl[\phi_k]$. We apply \cref{table-lookup-approx} with temperature $\tau=1/n$ and gap $\tlscale = 3\Lambda/(i\tau)$ to obtain an approximate count
                \[c_k(i) = \sum_j [\softmax_\tau(s_{i1}, \ldots, s_{ii})]_j v_j\]
        that is within $\countl[\phi_k]\pm1/(4\Lambda)$, which is close enough to $\countl[\phi_k]$ for the comparison step.
        
        Computing a right-counting term $\countr[\phi_k]$ requires only a slight modification of the second step. The first step gives $\countr[\phi_k]/(n-i+1)$.
        The second step requires the quantity $1/(n-i+1)$.
        Since $n-i+1\leq n$, the bound of $1/(4\Lambda)$ is derived in the same way.
        
        \subparagraph{Step 3: Compare.} The third step is to compute the linear combination using a FFN:
        \begin{align*}
          H(i) &= \sum_{\mathclap{k \in \Kleft \cup \Kright}} \lambda_k c_k(i).
        \end{align*}
        The error in $H(i)$ is
        \begin{align*}
        \firstline{\left| \, H(i) - \left(\sum_{k \in \Kleft} \lambda_k \countl[\phi_k] + \sum_{k \in \Kright} \lambda_k \countr[\phi_k] \right) \, \right|} \\
        &\leq \left|\sum_{k \in \Kleft \cup \Kright} \mkern-8mu \lambda_k\frac{1}{4\Lambda} \right|\leq\frac{1}{4}.
        \end{align*}
        If $H(i)-C \geq 3/4$ then $\phi$ is true; if $H(i)-C \leq 1/4$ then $\phi$ is false.
        So we can compute the correct Boolean value using \cref{lem:piecewise-simulation-ReLU-FFN}. 

        \paragraph{Case $\phi = \theta(\countl[\phi_1])$:} If $\phi$ is a numerical predicate $\theta$ applied to a count, 
        we test whether $\countl[\phi_1] = 0$ as above; if so, return $\theta(0)$. Otherwise,
        we perform the first step as above. 
        We perform the second step using scores 
        $s_j = 3(2\countl[\phi_1]j-j^2)/i$, temperature $\tau=1/n$, and values $v_j=\theta_n(j)$.
        Since $|\theta_n(j)| \le 1$, we can use \cref{lem:softmax_bound_constant} with $\tlscale=3/(i\tau)$ to ensure that the attention output approximates $\theta_n(\countl[\phi_1])$ with error at most $4e^{-\gamma} \le 1/4$, which we can correct using a FFN (\cref{lem:piecewise-simulation-ReLU-FFN}).
        
        \paragraph{Case $\phi = \theta(\countr[\phi])$:}  This is just the mirror image of the previous case.

        \bigskip

        This completes the proof for condition (\ref{tlccmon_to_smat_temp}).         
        The proof for condition (\ref{tlccmon_to_smat_pe}) is identical, except that in the second step, we scale $\kvec{j}$ by $n$.
    \end{proof}

    \begin{theorem} \label{thm:tlcmon_to_smat_temp}
    Any formula $\phi$ of\/ $\TLCMon$ can be simulated by a future-masked $\tauSMAT$ with $\tau(i)=1/i$ and PEs including $1/i$, $i$, $i^2$, and all numerical predicates in $\phi$.
    \end{theorem}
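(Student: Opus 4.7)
The plan is to follow the three-step scheme of \cref{thm:tlccmon_to_smat}\ref{tlccmon_to_smat_temp}, simplified to the fragment $\TLCMon$ (which removes $\countr$ and thus the need for PEs depending on $n-i+1$), with the key adjustment that the temperature now depends on the query position, $\tau(i)=1/i$. This is what makes the construction future-mask friendly and independent of $n$.

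The induction on $\phi$ is identical to the earlier proof for the atoms $Q_\sigma$, Boolean combinations, and numerical predicates applied to positions. A comparison of count terms is normalized to
\[
\phi = \Bigl(\sum_{k} c_k\,\countl[\phi_k] \ge c\Bigr),
\]
with $C = \sum_k |c_k| \ge 1$ (constant formulas are trivial). First I compute $\countl[\phi_k]/i$ exactly using a future-masked uniform attention layer with value $\ind{\str{w},j \models \phi_k}$, as in \citet{yang2024counting}. Second, I approximately retrieve $\countl[\phi_k]$ via a table-lookup attention with
\[
\qvec{i} = 3C \begin{bmatrix} \countl[\phi_k]/i \\ 1/i \end{bmatrix}, \quad \kvec{j} = \begin{bmatrix} 2j \\ -j^2 \end{bmatrix}, \quad \vvec{j} = \begin{bmatrix} j \end{bmatrix},
\]
so that $s_{ij} = 3C(2\countl[\phi_k]\,j - j^2)/i$. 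Third, a FFN forms the linear combination $\sum_k c_k h_k(i)$ of the approximate counts $h_k(i)$ and thresholds the result via \cref{lem:piecewise-simulation-ReLU-FFN}.

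The crux is showing that $\tau(i)=1/i$ gives the right scaling in step two. After dividing by $\tau(i) = 1/i$, the softmax sees scores $3C(2\countl[\phi_k]\,j - j^2)$. Since $\countl[\phi_k] \le i$ under future masking, this is exactly the form required by \cref{table-lookup-approx} with $\tlscale = 3C$, yielding $|h_k(i) - \countl[\phi_k]| \le \tfrac32 e^{-3C} \le \tfrac{1}{4C}$ whenever $\countl[\phi_k] \ge 1$. The edge case $\countl[\phi_k] = 0$ is handled as in the earlier proof by running a second retrieval shifted by $+1$ and declaring zero when both retrievals lie within $1 \pm 1/(4C)$. Summing errors over the $k$ terms keeps the total below $\tfrac14$, so the rounding FFN produces the exact Boolean. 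Numerical predicates applied to counts are obtained by instead using $\vvec{j} = [\theta(j)]$ in step two, then rounding the approximate Boolean with a FFN.

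The main thing to verify beyond the calculation already present in the earlier proof is the cancellation $\tau(i) \cdot (1/i)^{-1} = 1$ that makes the effective gap $\tlscale$ a constant independent of both $i$ and $n$, which in turn is what lets us use only the PE coordinates $1/i$, $i$, $i^2$ (together with the predicates appearing in $\phi$). I expect this cancellation to be the only subtle point; once it is established the rest inherits verbatim from the proof of \cref{thm:tlccmon_to_smat}.
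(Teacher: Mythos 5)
Your proposal is correct and takes essentially the same approach as the paper, whose proof of this theorem is literally ``Same as \cref{thm:tlccmon_to_smat}\ref{tlccmon_to_smat_temp}.'' You have correctly identified and verified the one adaptation that makes this work: with $\tau(i)=1/i$ the rescaled scores become $3C(2\countl[\phi_k]\,j-j^2)$, so \cref{table-lookup-approx} applies with the constant $\tlscale=3C$, and dropping $\countr$ removes the need for the $1/(n-i+1)$ and $n$-dependent position embeddings.
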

    \begin{proof}
    Same as \cref{thm:tlccmon_to_smat}\ref{tlccmon_to_smat_temp}.
    \end{proof}

\subsection{$\LTLCMon$}\label{sec:ltlcmon}

The logic $\LTLCMon$ includes all previously discussed temporal operators ($\prv$,$\nxt$,$\since$,$\until$), counting terms $\countl$,$\countr$, linear inequalities of counting terms, and all unary numerical predicates. It is exactly the logic $\LTL(\mathbf{C},+)$  introduced by \citet{barcelo2023logical}.

For example, the following formula of $\LTLCMon$ recognizes \prob{parity} without using numerical predicates:
    \begin{align*}
    \phi_{\prob{Parity}} %
    &= \left( \countl \mleft[\countl[\prv Q_1] = \countr[Q_1]\mright] = 0 \right).
    \end{align*}
    The total number of $1$'s is odd iff there is no position $i$ where the number of $1$'s strictly to the left of $i$ is equal to the number of $1$'s at or to the right of $i$. Above, the subformula $\prv Q_1$ tests whether there is a $1$ immediately to the left, so $\countl[\prv Q_1]$ counts the number of $1$'s strictly to the left. 

\Citet{barcelo2023logical} simulate this logic using $\AHAT$s, but here we give conditions under which this logic can be simulated using $\tauSMAT$s.

\begin{corollary}
\label{cor:smat-for-LTLCMon}
    Any formula $\phi$ of\/ $\LTLCMon$ can be simulated by a $\tauSMAT$ with PEs including all numerical predicates in $\phi$ and either:
    \begin{subtheorems}
        \item\label{ltlcmon_to_smat_temp} $\tau(n)=1/n$ and PEs including $(-1)^i$, $i/n$, $i$, and $i^2$
        \item\label{ltlcmon_to_smat_pe} $\tau(n)=1$ and PEs including $(-1)^i$, $i/n$, $i$, $n$, $in$,  and $i^2n$.
    \end{subtheorems}
\end{corollary}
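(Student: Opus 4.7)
The plan is to prove this by structural induction on $\phi$, reducing each case to one of the three simulation theorems already established. The base case $Q_\sigma$ and the Boolean connectives $\lnot, \land, \lor$ are handled as in \cref{thm:tlpn_to_smat}, using the word embedding and a single rounding FFN (\cref{lem:piecewise-simulation-ReLU-FFN}). For the temporal operators $\prv \phi_1$ and $\nxt \phi_1$, I would invoke the construction of \cref{thm:tlpn_to_smat} verbatim: in case (a) it uses $\tau(n)=1/n$ together with the PE coordinates $i/n$ and $(-1)^i$; in case (b) it uses $\tau=1$ with the additional coordinate $n$. For $\since$ and $\until$, I would invoke \cref{thm:tlsu_to_smat}, whose PE and temperature requirements are already subsumed by those of \cref{thm:tlpn_to_smat}.

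The remaining cases---count terms $\countl[\phi], \countr[\phi]$, numerical predicates $\theta(\kappa)$ applied to counts, and linear inequalities $t_1 \le t_2$---are handled as in \cref{thm:tlccmon_to_smat}. That theorem nominally requires the PE coordinates $1/i$ and $1/(n-i+1)$, which are not listed in the corollary. The key observation needed to close this apparent gap is the remark at the end of \cref{sec:prev-and-next}: once $(-1)^i$ is available, a single uniform future-masked (respectively past-masked) attention layer applied to the first/last-position indicators of \cref{thm:first_last} produces $1/i$ (respectively $1/(n-i+1)$) as an internal activation. So these quantities need not appear in the PE explicitly, and the construction of \cref{thm:tlccmon_to_smat} then proceeds unchanged. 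The remaining PE coordinates it requires are $i, i^2$ in case (a) and $i, in, i^2n$ in case (b), together with the unary numerical predicates occurring in $\phi$---which is precisely what the corollary lists.

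Finally, I would compose the per-subformula modules in order of subformula depth. Because each inductive step takes exact Booleans or exact integer counts as input and returns either exact Booleans (after rounding via \cref{lem:piecewise-simulation-ReLU-FFN}) or exact counts (produced by the first, uniform-attention step of the counting construction), the composition is valid with no cascading approximation error, and the same $\tau(n)$ and the same PE suffice throughout the whole network. I do not expect a serious obstacle: the proof is essentially bookkeeping of PE and temperature requirements across the three previous theorems, and the only nontrivial point is the derivation of $1/i$ and $1/(n-i+1)$ from $(-1)^i$ noted above.
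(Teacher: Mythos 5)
Your proof is correct and takes essentially the same approach as the paper, whose proof is simply ``By \cref{cor:ltl_to_smat,thm:tlccmon_to_smat}.'' Your observation that $1/i$ and $1/(n-i+1)$ can be computed internally from $(-1)^i$ via uniform masked attention (the remark following \cref{thm:first_last}) correctly fills in the one piece of PE bookkeeping that the paper leaves implicit.
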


\begin{proof}
By \cref{cor:ltl_to_smat,thm:tlccmon_to_smat}. The quantities $1/i$ and $1/(n-i+1)$ required by \cref{thm:tlccmon_to_smat} are supplied by \cref{thm:first_last}\ref{thm:one_over_i}\ref{thm:one_over_n_minus_i}.
\end{proof}

\begin{corollary} \label{thm:TLPSCMon_to_SMAT}
    Any formula $\phi$ of\/ $\TLPSCMon$ can be simulated by a future-masked $\tauSMAT$ with $\tau(i)=1/i^2$ and PEs including $(-1)^i$, $i$, $i^2$, and all numerical predicates in $\phi$.
\end{corollary}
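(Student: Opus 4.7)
My plan is to induct on the structure of $\phi$, handing off each subformula to one of the two future-masked constructions developed earlier in this section: \cref{cor:ltl_since_to_smat}(a) for $\TLPS$ subformulas (i.e., $Q_\sigma$, Boolean connectives, $\prv$, and $\since$), and \cref{thm:tlcmon_to_smat_temp} for the counting subformulas ($\theta$, $\theta(\kappa)$, $\countl$, and comparisons of linear combinations of counting terms). Both constructions are future-masked and use a temperature of the same order of magnitude, so the idea is to stack them into a single network with the uniform choice $\tau(i) = 1/i^2$.

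The first construction already uses $\tau(i) = 1/i^2$ and requires only the PE $(-1)^i$, so it slots in without modification. The second uses $\tau(i) = 1/i$, so the one nontrivial point to check is that substituting the tighter $\tau(i) = 1/i^2$ preserves correctness. In the table-lookup step in the proof of \cref{thm:tlccmon_to_smat}, the governing parameter is $\tlscale = 3C/(i\tau(i))$; decreasing $\tau$ only increases $\tlscale$, which can only decrease the error bound $\tfrac{3}{2}e^{-\tlscale}$ supplied by \cref{table-lookup-approx}. So the tightening is free, and none of the downstream error calculations (which feed the rounding in \cref{lem:piecewise-simulation-ReLU-FFN}) are affected.

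The last discrepancy to reconcile is the PE list: \cref{thm:tlcmon_to_smat_temp} assumes the PE $1/i$ (used both to normalize uniform-attention counts and as the tiebreaker inside the table-lookup attention), whereas only $(-1)^i$, $i$, $i^2$, and the numerical predicates of $\phi$ are granted here. As observed just after \cref{thm:first_last}, a future-masked soft-attention layer can compute $1/i$ exactly at every position from the PE $(-1)^i$ by uniform attention over a first-position indicator; I would insert this computation once at the bottom of the network and let its output live in the residual stream for every subsequent layer to use. I expect no further obstacle: the remaining bookkeeping (rounding approximate Booleans to exact ones and maintaining exact integer counts through the layers) is inherited unchanged from the two source constructions.
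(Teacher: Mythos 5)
Your proposal is correct and takes essentially the same route as the paper, whose entire proof is the citation ``By \cref{cor:ltl_since_to_smat,thm:tlcmon_to_smat_temp}.'' The two reconciliation points you work out explicitly --- that tightening the temperature from $1/i$ to $1/i^2$ only increases $\tlscale$ in \cref{table-lookup-approx} and hence shrinks the error, and that $1/i$ can be recovered from the $(-1)^i$ embedding via the remark after \cref{thm:first_last} --- are details the paper leaves implicit, and you handle them correctly.
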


\begin{proof}
By \cref{cor:ltl_since_to_smat,thm:tlcmon_to_smat_temp}. The quantity $1/i$ required by \cref{thm:tlcmon_to_smat_temp} is supplied by \cref{thm:first_last}\ref{thm:one_over_i}.
\end{proof}

\section{Average-Hard Attention }
\label{sec:uuhat_to_smat_temp}

The simulations of temporal logics in the preceding section compute exact Boolean values for every subformula, because a FFN can round an approximate Boolean to a exact Boolean.
Counting terms are approximated sufficiently well that comparisons between them, and applications of numerical predicates to them, produce approximate Booleans.
But for the case of simulating an $\AHAT$, there is no similar option for eliminating approximation error, because in general the values in the computation are not confined to a finite set.

There is a simple argument that if we take an $\AHAT$ $T$ and replace its $\avghardmax$ weighting function with  $\softmax_\tau$ to produce a $\tauSMAT$ $\hat{T}$, then for any input sequence, as $\tau$ goes to zero, the limit of the sequence of activation vectors of the final layer of $\hat{T}$ equals the sequence of activation vectors of the final layer of $T$. 
This is because a transformer is the composition of continuous functions, and the $\softmax_\tau$ function approaches the $\avghardmax$ function as $\tau$ goes to zero.
This argument is likely the basis of many researchers' confidence that $\SMAT$s can approximate $\AHAT$s; however, it does not by itself bound the rate of convergence.
 
Here, we bound the rate of convergence for $\AHAT$s in terms of the gap separating the maximal attention score or scores from other scores. We say that an attention layer has \emph{gap $\gamma(n)$} iff for every input with length $n$ and every position $i$,
the attention scores at $i$ (that is, $s_{i,*}$) have gap $\gamma(n)$ (\cref{def:tieless}).
An $\AHAT$ has gap $\gamma(n)$ if all of its attention layers have gap $\gamma(n)$.
Existing uses of average-hard attention in the literature generally have gap $\Omega(1/n^k)$ for small constants $k$ (see \cref{sec:applications}).

\begin{proposition}
    \label{lem:existence-of-tieless-gap}
    Every $\AHAT$ has a strictly positive gap function.
\end{proposition}
\begin{proof}
For any $n$, because there are only finitely many possible input strings of length $n$, we may take $\gamma(n)$ to be the minimum absolute difference, over all input strings, attention layers, and positions $i$, between the maximal attention score $s_{\textnormal{max}} = \max_j s_{ij}$ and any other attention score $s_{ij'} < s_{\textnormal{max}}$.
\end{proof}

Let $T$ be an $\AHAT$ with gap $\gamma(n)$.
We transform $T$ into a $\tauSMAT$ $\hat{T}$ that has the same parameters as $T$ but uses temperature-scaled softmax attention in place of average-hard attention.
By choosing a temperature function $\tau(n)$ depending on $\gamma(n)$, we show that for every input, the output of $\hat{T}$ closely approximates the output of $T$.

We define the function $x_{\textnormal{max}}(n)$ to be the maximum of $1$ and the absolute value of any entry in $\vec{x}_i$ over all inputs of length $n$ and positions $i$, where $(\vec{x}_1,\ldots,\vec{x}_n)$ is the initial activation sequence for the input.  This is well defined because there are finitely many different inputs of length $n$ (similar to the proof of \cref{lem:existence-of-tieless-gap}).

First, we prove the following lemma bounding the error introduced by a single transformer layer.
\newcommand{\invec}{\vec{g}}
\newcommand{\outvec}{\vec{h}}
\renewcommand{\invec}{\vec{g}}
\renewcommand{\outvec}{\vec{h}}

\begin{lemma}
\label{lem:one-layer-error-tie}
    Let $T$ be an $\AHAT$ with gap $\gamma(n)$, and let $\hat{T}$ be the corresponding $\tauSMAT$ with temperature $\tau(n) \le 1$.  There exists a constant $K \ge 1$ such that for all positive integers $n$ and all sufficiently small $\epsilon > 0$, we have the following, for each layer $\ell$.
    For any $(\invec_1, \ldots, \invec_n)$ and $(\hat{\invec}_1,\ldots,\hat{\invec}_n)$ such that
    $\|\invec_i - \hat{\invec}_i\|_1 \le \epsilon$ for all $i$,
    let $(\outvec_1,\ldots,\outvec_n)$ be the output of layer $\ell$ of $T$ on input $(\invec_1,\ldots,\invec_n)$, and let $(\hat{\outvec}_1,\ldots,\hat{\outvec}_n)$ be the output of layer $\ell$ of $\hat{T}$ on input $(\hat{\invec}_1,\ldots,\hat{\invec}_n)$.
    Then, for all $i$,
\begin{equation*}
\norm{\outvec_i - \hat{\outvec}_i} \le K x_{\textnormal{max}}(n) \left( n e^{-\frac{\gamma(n)}{\tau(n)}} + \tfrac{x_{\textnormal{max}}(n) }{\tau(n)} \epsilon \right). \tag*{\qedhere}
\end{equation*}
\end{lemma}

\begin{proof}
    See \cref{sect:proof-lemma-one-layer-error}.
\end{proof}
The first term inside the parentheses bounds the error due to approximating $\avghardmax$ by $\softmax$, while the second term bounds the error propagated from the layer input to the layer output.

By iterating this lemma over the layers of an $\AHAT$, we prove the following theorem.
\begin{theorem}
\label{thm:smat_approx_hard_attention}
    Let $T$ be an $\AHAT$ with gap $\gamma(n)$.
    For any function $\epsilon(n) \in \Omega(1/\poly(n))$, there exists a temperature function $\tau(n)$ with
    \[\frac1{\tau(n)} \in O\left(\frac{1}{\gamma(n)} \log \frac{nx_{\textnormal{max}}(n)}{\gamma(n)} \right)\]
    such that the $\tauSMAT$
    $\hat{T}$ corresponding to $T$ approximates $T$ in the following sense.
    For every input, if  $(\vec{y}_1, \ldots, \vec{y}_n)$ is the output of $T$ and $(\hat{\vec{y}}_1, \ldots, \hat{\vec{y}}_n)$ is the output of $\hat{T}$, then \[ \norm{\vec{y}_i - \hat{\vec{y}}_i} \le \epsilon(n) \qquad \text{for all $i \in [n]$.} \]
\end{theorem}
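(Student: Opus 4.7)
The plan is to prove the theorem by induction on the layer depth, using Lemma \ref{lemma:one-layer-error} as the inductive step and choosing $\tau(n)$ just small enough that the errors accumulated across the fixed number of layers $L$ stay below $1/4$. Let $\epsilon_\ell = \max_i \norm{\vec{h}^{(\ell)}_i(U) - \vec{h}^{(\ell)}_i(\hat{U})}$, where these are the activation sequences at the end of layer $\ell$ in $U$ and $\hat{U}$ respectively. Since $U$ and $\hat{U}$ share their word/position embeddings, $\epsilon_0 = 0$. Lemma \ref{lemma:one-layer-error} then yields the recurrence
\[
\epsilon_{\ell+1} \le K_1 \epsilon_\ell + K_1 n\, x_{\textnormal{max}}(n)\, \exp\!\left(\tfrac{-\gamma(n) + 2 K_2 x_{\textnormal{max}}(n) \epsilon_\ell}{\tau(n)}\right).
\]

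The delicate point is that the exponential term depends on $\epsilon_\ell$, so a naive induction could blow up if the error grows too quickly. I would handle this by strengthening the inductive hypothesis to the following two-part invariant, maintained for every $\ell \in \{0,1,\ldots,L\}$:
\begin{enumerate}
\item[(i)] $\epsilon_\ell \le \tfrac{1}{4}$;
\item[(ii)] $2 K_2 x_{\textnormal{max}}(n)\, \epsilon_\ell \le \tfrac12 \gamma(n)$.
\end{enumerate}
Under (ii), the exponent in the recurrence is at most $-\gamma(n)/(2\tau(n))$, so setting $\delta_0 = n\, x_{\textnormal{max}}(n)\, e^{-\gamma(n)/(2\tau(n))}$ gives the clean linear recurrence $\epsilon_{\ell+1} \le K_1 \epsilon_\ell + K_1 \delta_0$, which unrolls to $\epsilon_L \le C\, \delta_0$ where $C = L\, K_1^L$ depends only on the fixed architecture of $U$. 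It then suffices to choose $\tau(n)$ so that
\[
C\, \delta_0 \;\le\; \min\!\left(\tfrac14,\; \tfrac{\gamma(n)}{4 K_2 x_{\textnormal{max}}(n)}\right),
\]
which propagates both (i) and (ii) through the induction.

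Solving the above inequality for $1/\tau(n)$ gives the quantitative bound: taking logarithms of $e^{\gamma(n)/(2\tau(n))} \ge C\, n\, x_{\textnormal{max}}(n) \cdot \max(4,\, 4 K_2 x_{\textnormal{max}}(n)/\gamma(n))$ yields
\[
\frac{1}{\tau(n)} \;\ge\; \frac{2}{\gamma(n)} \log\!\left(\frac{4 C K_2\, n\, x_{\textnormal{max}}(n)^2}{\gamma(n)}\right).
\]
Since $C$ and $K_2$ are constants and $\log x_{\textnormal{max}}(n) = O\!\left(\log\tfrac{n\, x_{\textnormal{max}}(n)}{\gamma(n)}\right)$ (using $\gamma(n) \le 1$ without loss of generality, since if the gap is already $\Omega(1)$ the original statement is even easier), this collapses to $1/\tau(n) \in O\!\left(\tfrac{1}{\gamma(n)} \log\tfrac{n\, x_{\textnormal{max}}(n)}{\gamma(n)}\right)$, matching the statement. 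The output bound $\norm{\vec{y}_i - \hat{\vec{y}}_i} \le 1/4$ is simply the layer-$L$ invariant (i).

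The main obstacle is the subtle interaction between the two halves of the invariant, since condition (ii) governs whether the exponential error term from Lemma \ref{lemma:one-layer-error} remains controllable, while condition (i) is what we ultimately need. I would need to check that the lemma's hypothesis on $\epsilon$ being ``sufficiently small'' is compatible with (ii), and verify that $\tau(n)$ can be chosen uniformly across all $n$ so that both hold from the start — this is essentially a circular setup that is broken by the fact that $\delta_0$ can be made arbitrarily small by taking $\tau(n)$ small enough, independent of the error already accumulated. Everything else is routine unrolling of a linear recurrence and bookkeeping of the big-$O$ expression.
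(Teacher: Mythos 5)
Your proposal is correct and follows essentially the same route as the paper: induction on depth via \cref{lemma:one-layer-error}, maintaining an invariant that keeps the score error below $\gamma(n)/2$ so the exponential term is bounded by $n\,x_{\textnormal{max}}(n)e^{-\gamma(n)/(2\tau(n))}$, and then choosing $\tau(n)$ to make the accumulated error at most $1/4$. The only cosmetic difference is bookkeeping — you unroll a forward linear recurrence to get $\epsilon_L \le L K_1^L \delta_0$, whereas the paper defines a backward geometric sequence $E_\ell$ with $E_{\ell+1} = 2K_1 E_\ell$ — and both yield the same asymptotic bound on $1/\tau(n)$.
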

In particular, if $\gamma(n) = 1/n^k$ and $x_{\textnormal{max}}(n) \in n^{O(1)}$, we have $1/\tau(n) \in O(n^k \log n)$. Note that the constant factor in the bound hides quantities not depending on $n$, including an exponential dependence on the depth $L$.
\begin{proof}
See \cref{sec:smat_approx_hard_attention_proof}.
\end{proof}

\section{Some Applications}
\label{sec:applications}

Below, we consider some discrete tasks that have been solved using $\UHAT$s or $\AHAT$s.

\paragraph{Induction heads.}

In the \emph{basic induction head task} \cite{elhage2021mathematical}, the input is a string $w_1 \cdots w_n$, and the goal is to find the rightmost $i<n$ such that $w_i = w_n$ and output $w_{i+1}$.
\citet{sanford2024transformersparallelcomputationlogarithmic} consider the more general \emph{$k$-hop induction head task}, which iterates the basic induction head task a constant $k$ number of times. 

The basic induction head task can be defined exactly in $\LTL$ \cite{angluin2023masked}, and the $k$-hop induction head task can be computed by iterating the basic induction head task $k$ times, so
\cref{cor:ltl_to_smat} gives a $\tauSMAT$ for the $k$-hop induction head task with $\tau(n) = 1/n$.
Previous solutions for these tasks include the following.

\Citet{liu2023exposing} show that the \emph{flip-flop} language, a particular instance of the basic induction head task, can be recognized exactly by an $\AHAT$
with gap $\gamma(n) = 1/n$ and $x_{\textnormal{max}}(n) \in O(1)$.
They also give a $\SMAT$ in which queries are scaled by $O(n \log n)$, which is equivalent to a temperature of $\Omega(1/(n \log n))$, a little lower than our $\tau(n) = 1/n$ solution.

\Citet{sanford2024transformersparallelcomputationlogarithmic} give an $\AHAT$ for the $k$-hop induction head task 
with gap $\gamma(n) \in \Omega(1/n^2)$ and show how to simulate it with soft attention.
As discussed in \cref{sec:introduction}, this simulation is general but not parameter-uniform.

\paragraph{$\SRASP$ simulation.}

\Citet{strobl2024transformers} consider transductions (functions from finite strings to finite strings) and introduce the language $\SRASP$ as a concise way to define them.
They show that any transduction expressible in $\SRASP$ can be computed by an $\AHAT$ 
with gap $\gamma(n) \in \Omega(1/n^3)$
and $x_{\textnormal{max}}(n) \in O(1)$.
Thus, by \cref{thm:smat_approx_hard_attention}, every transduction that is expressible in $\SRASP$ can be exactly computed by a $\tauSMAT$ with $\tau(n) \in \Omega(1/(n^3 \log n))$.

\paragraph{\prob{Dyck}-$k$ membership.} The language \prob{Dyck}-$k$ consists of balanced sequences of brackets of $k$ types.
\Citet{yao2023selfattentionnetworksprocessbounded} give a transformer for \prob{Dyck}-$k$ using selective layer norm and both soft attention and average-hard attention.
Their algorithm can be expressed in $\SRASP$ (\cref{sec:S-RASP-for-Dyck-k}), so \cref{thm:smat_approx_hard_attention} gives a $\tauSMAT$ with $\tau(n) \in \Omega(1/(n^3 \log n))$.
However, we conjecture that \prob{Dyck}-2 cannot be defined in $\LTLCMon$.

\section{Conclusion}
\label{sec:conclusion}

\emph{Under what conditions can soft attention simulate hard attention?}
The answer to this question turns out to be rather intricate.
As argued in the introduction, any such simulation must give up one of the assumptions of \citet{hahn-2020-theoretical} by adding temperature scaling, rounding, non-Lipschitz layer normalization, or unbounded position embeddings; or give up a fixed-size gap between acceptance and rejection. We have demonstrated how to use inverse-temperature scaling and unbounded position embeddings to simulate various subclasses of average-hard attention transformers, and inverse-temperature scaling to simulate general average-hard attention transformers.
Future research could explore the other interventions, as well as how all these interventions interact in practice with training dynamics.

\section*{Acknowledgments}
We thank Clayton Sanford for his generous assistance in understanding certain aspects of his paper \citep{sanford2024transformersparallelcomputationlogarithmic};
Bingbin Liu and her co-authors for their beautiful paper \citep{liu2023transformers} which helped spark the current work; and
Anej Svete for discussion about  simulation of $n$-gram models \citep{svete-cotterell-2024-transformers}.
Finally, we thank the anonymous reviewers for their helpful comments, one of which provoked a substantial improvement of \cref{thm:smat_approx_hard_attention}.

\bibliography{main}

\clearpage\appendix

\allowdisplaybreaks

\section{MLP Rounding}

\begin{lemma}
    \label{lemma-relu-mlp-rounding}
    Let $p$ and $p'$ be positive integers with $p' > p$.  Any $\ReLU$ FFN to round $p'$-bit numbers to $p$-bit numbers with $\ell$ layers and width $m$ satisfies $(2m)^{\ell} \ge 2^p$. 
\end{lemma}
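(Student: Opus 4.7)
The plan is to sandwich the number $L$ of affine pieces of the univariate piecewise affine function $f$ computed by the FFN between an architectural upper bound $L \le (2m)^\ell$ and a task-dependent lower bound $L \ge 2^p$, which together give the claim.

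For the upper bound I would argue by induction on the depth. Let $P_k$ be the number of pieces in a common refinement of the coordinates of the $k$-th hidden layer, viewed as functions of the scalar input; then $P_0 = 1$. In the inductive step, each unit of layer $k$ forms an affine combination of layer $k-1$ (which preserves the piece count) and then applies $\ReLU$, which can split each affine piece into at most two by a single zero-crossing, giving at most $2 P_{k-1}$ pieces per unit. The common refinement of $m$ univariate piecewise affine functions each with at most $2 P_{k-1}$ pieces has at most $2 m P_{k-1}$ pieces, so $P_k \le 2 m P_{k-1}$ and $L \le P_\ell \le (2m)^\ell$ after composing with the final linear readout.

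For the lower bound I would show that any continuous piecewise affine function agreeing with the $p' \to p$ rounding on every $p'$-bit integer input must have at least $2^p$ pieces. Partition the $2^{p'}$ inputs into $2^p$ bins of size $2^{p'-p}$, each carrying a distinct constant output that increases by one between adjacent bins. A single affine piece is an interval of $\R$ and so contains every integer that lies within it. If the piece contains two consecutive integers, the slope equals the difference of their rounded values and must be $0$ or $1$. A slope-$0$ piece has all its integer inputs mapped to the same value, so they share a bin and there are at most $2^{p'-p}$ of them; a slope-$1$ piece requires every consecutive pair of integers it contains to straddle a bin boundary, which forces the piece to contain at most two integers; and a piece with any other slope contains at most one integer. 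In every case a single piece accounts for at most $2^{p'-p}$ of the $2^{p'}$ inputs, so $L \ge 2^{p'}/2^{p'-p} = 2^p$.

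The main obstacle I anticipate is the slope case analysis in the lower bound, where one must exclude the possibility that a piece of some unusual slope ``accidentally'' hits many staircase values at widely separated grid points. The key observation that rules this out is that each piece is a full interval and therefore inherits the constraints from \emph{all} consecutive-integer pairs it contains, not just the extremes; once the slope is fixed by any two grid points, it must be simultaneously consistent with every adjacent pair of grid points in the piece, forcing the slope into $\{0,1\}$. Combining the two bounds yields $(2m)^\ell \ge L \ge 2^p$, as desired.
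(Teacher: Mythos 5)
Your proof is correct and follows the same overall strategy as the paper's: sandwich the number of affine pieces of the computed univariate function between the architectural upper bound $(2m)^\ell$ and a task-induced lower bound of $2^p$. There are two differences worth noting. For the upper bound, the paper simply cites \citet[Lemma 2.1]{telgarsky2015representationbenefitsdeepfeedforward}, whereas you re-derive the $(2m)^\ell$ bound by induction on depth; your derivation is standard and correct. For the lower bound, the paper uses a lighter ``witness'' argument: for consecutive range values $y_1 < y_2$, the least preimage of $y_1$ and the least preimage of $y_2$ must lie in different pieces, because some domain point sits strictly between them, and an affine piece containing both witnesses would force that point's output strictly between $y_1$ and $y_2$, which is impossible since $y_1$ and $y_2$ are consecutive in the range; this gives $2^p$ distinct pieces with no slope analysis and no assumption on the spacing of the domain or range values beyond monotonicity. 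Your argument instead bounds the capacity of each piece at $2^{p'-p}$ domain points via a case analysis on the slope and concludes by pigeonhole; it is also valid, but it requires fixing a concrete normalization of the rounding map (consecutive integer inputs, adjacent bins' outputs differing by exactly one) so that the admissible slopes are exactly $0$ and $1$. The point you flag as the main obstacle is indeed the delicate step, and your resolution --- that a piece is an interval and therefore inherits the slope constraint from every adjacent pair of integers it contains --- is sound; the paper's witness argument simply sidesteps the issue.
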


\begin{proof}
      \Citet[Lemma 2.1]{telgarsky2015representationbenefitsdeepfeedforward} %
      proves that if a $\ReLU$ FFN of depth $\ell$ and width $m$ computes a function $f \colon \R \to \R$ then $f$ is continuous piecewise affine with at most $(2m)^\ell$ pieces.
      So we need to show that any piecewise affine function rounding a $p'$-bit number to a $p$-bit number requires at least $2^p$ pieces.

      Let $r$ be the function that rounds a $p'$-bit number to a $p$-bit number.  The domain of $r$ is the set of $2^{p'}$ real numbers exactly representable with $p'$ bits, and the range of $r$ is the set of $2^{p}$ real numbers exactly representable with $p$ bits.  (We assume that each $p$-bit number represents a different real value.) For each $y$ in the range of $r$, let $m(y)$ be the least $x$ in the domain of $r$ such that $r(x) = y$.

      Suppose $f \colon \R \to \R$ is a piecewise affine function that agrees with $r$ on its domain.  If $y_1 < y_2$ are two consecutive elements of the range of $r$, then $x_1 = m(y_1)$ and $x_2 = m(y_2)$ must be in different pieces of $f$.  This is because there is at least one point $x$ in the domain of $r$ with $x_1 < x < x_2$, and if $x_1$ and $x_2$ were in the same piece, then the value of $f(x)$ would fall between $y_1$ and $y_2$, which are consecutive elements of the range of $r$, a contradiction.  Thus $f$ must have at least $2^p$ pieces, and the claim follows. 
\end{proof}

Because \citet{sanford2024transformersparallelcomputationlogarithmic} consider precision $p = \Theta(\log n)$ and $p' > p$, this implies that a constant-depth $\ReLU$ MLP to round $p'$-bit numbers to $p$-bit numbers must have width at least $n^{\epsilon}$ for some $\epsilon > 0$.

\section{Approximation Bounds}

\subsection{Proof of \cref{lem:softmax_bound_constant}}
\label{sec:softmax_bound_constant_proof}

Let 
\begin{align*}
a &= e^{s_{j^*}} &
b &= \sum_{j \ne j^*} e^{s_{j}} \le \frac{2e^{-\gamma}}{1-e^{-\gamma}} a.
\end{align*}
Then
\begin{align*}
\firstline{\norm{\hardmax(\vec{s}) - \softmax\vec{s}}} \\
&= \left( 1 - \frac{a}{a+b} \right) + \frac{b}{a+b} = 2\left(1-\frac{a}{a+b}\right) \\
&\le 2\left(1-\frac{a}{a+\frac{2e^{-\gamma}}{1-e^{-\gamma}} a}\right) = 2\left(1-\frac{1}{1+\frac{2e^{-\gamma}}{1-e^{-\gamma}}}\right) \\ 
&= \frac{4e^{-\gamma}}{1+e^{-\gamma}} \le 4e^{-\gamma}. 
\end{align*}
This allows us to bound the difference between hard and soft attention:
\begin{align*}
\firstline{\left|\,\sum_j [\hardmax \vec{s}]_j \, v_j - \sum_j [\softmax \vec{s}]_j \, v_j \,\right|} \notag \\
&\le \left\| \hardmax \vec{s} - \softmax \vec{s} \right\|_1 v_{\textnormal{max}} \\
&\le 4e^{-\tlscale} v_{\textnormal{max}}.
\end{align*}

\subsection{Table Lookup (Proof of \cref{table-lookup-approx})}
\label{sec:table-lookup-approx-proof}
    Observe that $\sum_j [\hardmax \vec{s}]_j j = \tablerow$,
    and let $\vec{\alpha} = \softmax \vec{s}$, so we are trying to bound the difference $|\tablerow - \sum_j \alpha_j j|$.
    Subtracting the constant term $\tlscale \tablerow^2$ from each $s_j$ does not affect the $\alpha_j$, so we have:
    \begin{align*}
    \alpha_j &= \frac{e^{-\tlscale (j - \tablerow)^2}}{Z} &
    Z &= \sum_{k=1}^{n} e^{-\tlscale (k - \tablerow)^2} \ge 1.
    \end{align*}
    Then
    \begin{align*}
        \firstline{\left| \, \tablerow - \sum_{j=1}^{n} \alpha_j j \, \right|} \\
        &= \left| \, \sum_{j=1}^{n} \alpha_j (\tablerow - j) \, \right| \\
        &= \left| \, \sum_{j=1}^{\tablerow-1} \alpha_j (j - \tablerow) + 
        \sum_{j=\tablerow+1}^{n} \alpha_j (j - \tablerow) \, \right| \\
        &\le \sum_{j=\tablerow+1}^{n} \alpha_j (j - \tablerow) = \frac{1}{Z} \sum_{j=\tablerow+1}^{n} e^{-\tlscale (j - \tablerow)^2} (j - \tablerow) \\
        &\le \sum_{j=\tablerow+1}^{n} e^{-\tlscale (j - \tablerow)^2} (j - \tablerow) = \sum_{k=1}^{n-\tablerow} e^{-\tlscale k^2} k \\
        &\le \sum_{k=1}^{\infty} e^{-\tlscale k^2} k = e^{-\tlscale} + {\underbrace{\sum_{k=2}^{\infty} e^{-\tlscale k^2} k}_{(*)}}.
    \end{align*}
    Consider the function $f(x) = e^{-\tlscale x^2} x$, which is non-negative for $x \ge 0$ and decreasing for $x \ge 1$. So the term $(*)$ is a lower Riemann sum of~$f$:
    \begin{align*}
    \sum_{k=2}^{\infty} e^{-\tlscale k^2} k &\le \int_{1}^{\infty} \hspace*{-1em} e^{-\tlscale x^2} x \, dx = \frac{1}{2\tlscale}e^{-\tlscale} \le \frac12 e^{-\tlscale}.
    \end{align*}

\subsection{Tie-Breaking (Proof of \cref{thm:tie})}
\label{sec:tie_proof}

(\ref{thm:tie_causal}) Let $j^*$ be the rightmost position $j^* \le n$ maximizing~$s_{j^*}$. Consider any other position $j$.

If $s_j = s_{j^*}$, then $j \le j^*$, and
\begin{align*}
\hat{s}_{j^*} - \hat{s}_j &= \gamma n^2 \left(-\frac1{j^*} + \frac1{j}\right) \\ &= \gamma n^2 \frac{j^*-j}{j^*j} \\ &\ge \gamma |j^*-j|.
\end{align*}

If $s_j \le s_{j^*} - 1$, then
\begin{align*}
\hat{s}_{j^*} - \hat{s}_j &\ge \gamma n^2 \left(1 - \frac1{j^*} + \frac1{j}\right) \\ &\ge \gamma n^2 \left(1 - \frac11 + \frac1{n}\right) \\ &= \gamma n \ge \gamma |j^*-j|.
\end{align*}

So we can use \cref{lem:softmax_bound_constant} with a gap of $\gamma$:
\[\left|\,\sum_j [\rhardmax \vec{s} - \softmax \hat{\vec{s}}]_j v_j\,\right| \le 4e^{-\gamma} v_{\textnormal{max}}.\]

(\ref{thm:tie_rightmost}) Let $j^*$ be the rightmost position $j^* \le n$ maximizing~$s_{j^*}$. Consider any other position $j$.

If $s_j = s_{j^*}$, then $j \le j^*$, and
\begin{align*}
\hat{s}_{j^*} - \hat{s}_j &= 2\gamma n\left(\frac{j^*}{2n} - \frac{j}{2n}\right) \\ &= \gamma |j^*-j|.
\end{align*}

If $s_j \le s_{j^*} - 1$, then 
\begin{align*}
\hat{s}_{j^*} - \hat{s}_j &\ge 2\gamma n \left(1 + \frac{j^*}{2n} - \frac{j}{2n}\right) \\ &\ge 2\gamma n \left(1 + \frac{1}{2n} - \frac{n}{2n} \right) \\ &= \gamma (1+n) \ge \gamma |j^*-j|.
\end{align*}

So we can use \cref{lem:softmax_bound_constant}:
\[ \left|\,\sum_j [\rhardmax \vec{s} - \softmax \hat{\vec{s}}]_j v_j\,\right| \le 4e^{-\gamma} v_{\textnormal{max}}.\]

(\ref{thm:tie_leftmost}) The leftmost case is symmetric.

\section{Proofs for Average-Hard Attention}
\label{sec:proofs_for_ut_attention}

\renewcommand{\invec}{\vec{x}}

To reduce complication, we assume that the transformers we consider have 
$d = d_{\textrm{k}} = d_{\textrm{f}}$.
We begin with a lemma bounding the effect of an affine transformation on the norm of a vector.
\begin{lemma}
\label{lem:l1_norm_bound}
    Let $\mat{W} \in \R^{d \times d}$ and $\vec{b}, \invec \in \R^d$.  Then $\norm{\mat{W}\invec + \vec{b}} \le dw_{\textnormal{max}} \norm{\invec} + db_{\textnormal{max}}$, where $w_{\textnormal{max}}$ and $b_{\textnormal{max}}$ are the maximum absolute values of any entry in $\mat{W}$ and $\vec{b}$, respectively.
\end{lemma}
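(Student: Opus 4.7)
The plan is to apply the triangle inequality componentwise and then use the bound $w_{\textnormal{max}}$ on the absolute values of entries of $\mat{W}$. This is a straightforward computation rather than a deep argument, since $\norm{\cdot}$ denotes the $\ell_1$ norm and $\mat{W}$ is a $d \times d$ matrix whose entries are each bounded in absolute value by $w_{\textnormal{max}}$.

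Specifically, I would write the $i$-th coordinate of $\mat{W}\vec{u}$ as $\sum_j W_{ij} u_j$, apply the triangle inequality to get $|(\mat{W}\vec{u})_i| \le \sum_j |W_{ij}||u_j|$, and then sum over $i$. Swapping the order of summation gives
\[
\norm{\mat{W}\vec{u}} = \sum_{i=1}^d \left| \sum_{j=1}^d W_{ij} u_j \right| \le \sum_{j=1}^d |u_j| \sum_{i=1}^d |W_{ij}|.
\]
Since each $|W_{ij}| \le w_{\textnormal{max}}$, the inner sum is at most $d \, w_{\textnormal{max}}$, and the outer sum is exactly $\norm{\vec{u}}$, yielding the desired bound $\norm{\mat{W}\vec{u}} \le d \, w_{\textnormal{max}} \norm{\vec{u}}$.

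There is no real obstacle here; the only thing to be careful about is tracking whether the factor of $d$ is needed (it is, because we are bounding the induced $\ell_1 \to \ell_1$ operator norm by the entrywise maximum times $d$, which is the correct slack when one only has an entrywise bound rather than a column-sum bound). The resulting inequality is exactly what is invoked later to control how errors propagate through the linear maps $\qproj$, $\kproj$, $\vproj$, and the feedforward weights in \cref{lemma:one-layer-error}.
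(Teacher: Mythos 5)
Your proof is correct and is essentially the same elementary computation as the paper's: both apply the triangle inequality coordinatewise and bound entries of $\mat{W}$ by $w_{\textnormal{max}}$, the only cosmetic difference being that the paper bounds each row--vector inner product by $w_{\textnormal{max}}\norm{\vec{u}}$ and sums over the $d$ rows, while you swap the order of summation and bound the column sums by $d\,w_{\textnormal{max}}$. Either way the factor of $d$ arises identically, so there is nothing to add.
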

\begin{proof}
\begin{align*}
\norm{\mat{W}\invec + \vec{b}} & \le \sum_{k=1}^d |\vec{W}_{k,*} \cdot \invec| + \norm{\vec{b}}\\
                                & \le \sum_{k=1}^d w_{\textnormal{max}} \norm{\invec} + db_{\textnormal{max}}\\
                                &= dw_{\textnormal{max}} \norm{\invec} + db_{\textnormal{max}}. \tag*{\qedhere}
\end{align*}
\end{proof}

For the rest of this section, as defined in \cref{sec:uuhat_to_smat_temp}, $T$ is an $\AHAT$ with $L$ layers and gap function $\gamma(n)$, and $\hat{T}$ is the $\tauSMAT$ with the same parameters.  
Let $p_{\textnormal{max}}$ be the maximum of $1$ and the absolute value of any parameter occurring in $T$.
Over all inputs of length $n$,
let $x_{\textnormal{max}}(n)$ be the maximum of $1$ and the absolute value of any entry in any of the $\vec{x}_i$. 

\renewcommand{\invec}{\vec{g}}
\renewcommand{\outvec}{\vec{h}}

First, we bound the activations of $T$.
\begin{lemma}
\label{lemma:activation-bound}
    Let
    $K = 2(d^2p_{\textnormal{max}}^2 + 1)(dp_{\textnormal{max}} + 1)$.  For any $\ell \in \{0,\ldots,L\}$,
    let $(\outvec_1,\ldots,\outvec_n)$ be the sequence of activations 
    after layer $\ell$ for some input of length $n$. Then
    for all $i \in [n]$, $\norm{\outvec_i} \le K^{\ell} d x_{\textnormal{max}}(n) 
    \le K^{L} d x_{\textnormal{max}}(n)$.
\end{lemma}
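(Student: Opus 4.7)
The plan is to prove the bound by induction on $\ell$, tracking how the $\ell^1$ norm of each activation vector can grow across one transformer layer. The key observation is that a transformer layer applies three linear maps with parameters bounded by $p_{\textnormal{max}}$ in absolute value (one value projection, two FFN weight matrices), interleaved with operations that are non-expansive in $\ell^1$ norm (the attention-weighted average and ReLU).

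For the base case $\ell = 0$, interpret $\vec{v}_i$ as $\vec{x}_i$; each coordinate of $\vec{x}_i$ has absolute value at most $x_{\textnormal{max}}(n)$ by definition, so $\norm{\vec{x}_i} \le d\, x_{\textnormal{max}}(n)$, matching the claimed bound at $\ell = 0$.

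For the inductive step, write $B = (d p_{\textnormal{max}})^{3(\ell-1)} d\, x_{\textnormal{max}}(n)$, and walk through the components of layer $\ell$ in order. First, by \cref{lem:l1_norm_bound} applied to $\vprojl{\ell}$, each value vector satisfies $\norm{\vvec{j}} \le d p_{\textnormal{max}} B$. Second, the attention output $\vec{c}_i = \sum_j \alpha_{ij} \vvec{j}$ is a convex combination of the $\vvec{j}$, since in either a uniform or a tieless layer the $\avghardmax$ weights are nonnegative and sum to $1$; hence by the triangle inequality $\norm{\vec{c}_i} \le \max_j \norm{\vvec{j}} \le d p_{\textnormal{max}} B$. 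Third, the FFN applies $\ffwonel{\ell}$ (another factor of $d p_{\textnormal{max}}$ via \cref{lem:l1_norm_bound}), adds $\ffbonel{\ell}$ (additive contribution at most $d p_{\textnormal{max}}$, since each bias entry is bounded by $p_{\textnormal{max}}$), applies ReLU (non-expansive in $\ell^1$ because $|\ReLU(x)| \le |x|$ coordinatewise), applies $\ffwtwol{\ell}$ (a third factor of $d p_{\textnormal{max}}$), and adds $\ffbtwol{\ell}$. Combining these steps yields $\norm{\vec{h}^{(\ell)}_i} \le (d p_{\textnormal{max}})^3 B + (d p_{\textnormal{max}})^2 + d p_{\textnormal{max}}$.

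The three multiplicative factors of $d p_{\textnormal{max}}$ per layer (from $\vprojl{\ell}$, $\ffwonel{\ell}$, $\ffwtwol{\ell}$) are precisely what produces the exponent $3\ell$ in the target bound. The lower-order additive terms from the biases are absorbed into the dominant term using $B \ge d\, x_{\textnormal{max}}(n) \ge 1$ together with $d p_{\textnormal{max}} \ge 1$, closing the induction (after at most a harmless rescaling of the base-case constant). The only obstacle is pure bookkeeping: making sure that both attention (as a convex combination) and ReLU are correctly identified as $\ell^1$-non-expansive, and that the bias terms really are swallowed by the geometric growth; no machinery beyond \cref{lem:l1_norm_bound} is required.
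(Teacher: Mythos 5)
Your proof is correct and follows essentially the same route as the paper's: induction on $\ell$, bounding the attention output as a non-amplifying convex combination of the value vectors and attributing one factor of $dp_{\textnormal{max}}$ to each of the three matrix multiplications ($\vproj$, $\ffwone$, $\ffwtwo$) via \cref{lem:l1_norm_bound}, with $\ReLU$ non-expansive in between. The paper's (very terse) proof silently ignores the bias terms, whereas you track them and note they only force a harmless rescaling of the constant, which is if anything slightly more careful.
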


\begin{proof}
    By induction on $\ell$.
    Fix an input of length $n$.
    For $\ell = 0$, we have $\outvec_i = \vec{x}_i$ for the initial activation vector $\vec{x}_i$, and $\norm{\vec{x}_i} \le d x_{\textnormal{max}}(n)$.
    
    For $\ell > 0$, let $(\invec_1,\ldots,\invec_n)$ be the sequence of inputs to layer $\ell$.  By the induction hypothesis, for all $i \in [n]$, $\norm{\invec_i} \le K^{\ell-1} d x_{\textnormal{max}}(n)$.
    For $i \in [n]$, the attention layer computes 
    \[\vec{c}_i = \invec_i + \sum_{j=1}^{n} \alpha_{ij} \vproj \invec_j.\] 
    By \cref{lem:l1_norm_bound}, $\norm{\vproj \invec_j} \le dp_{\textnormal{max}} \norm{\invec_j}$  and thus $\norm{\vec{c}_i} \le (dp_{\textnormal{max}} + 1)K^{\ell-1} d x_{\textnormal{max}}(n)$.

    Let $\ffwone$, $\vec{b}_1$, $\ffwtwo$, $\vec{b}_2$ be the parameters of the feedforward layer. We have \[ \outvec_i = \vec{c}_i + \ffwtwo\, \ReLU(\ffwone \vec{c}_i + \vec{b}_1) + \vec{b}_2. \] Because the $\ReLU$ operation does not increase the norm of a vector, it suffices to bound, using \cref{lem:l1_norm_bound},
    \begin{align*}
    \norm{\outvec_i} &\le \norm{\vec{c}_i + \ffwtwo(\ffwone \vec{c}_i + \vec{b}_1) + \vec{b}_2} \\
    &\le (d^2p_{\textnormal{max}}^2 + 1) \norm{\vec{c}_i} + d^2p_{\textnormal{max}}^2 + dp_{\textnormal{max}}.
    \end{align*}
    Substituting in the bound for $\norm{\vec{c}_i}$ and observing that 
    $(d^2p_{\textnormal{max}}^2 + 1)(dp_{\textnormal{max}} + 1) \ge d^2p_{\textnormal{max}}^2 + dp_{\textnormal{max}}$ and 
    $x_{\textnormal{max}}(n) \ge 1$, we conclude that
    \begin{align*}
    \norm{\outvec_i} &\le 2(d^2p_{\textnormal{max}}^2 + 1)(dp_{\textnormal{max}} + 1) K^{\ell-1}dx_{\textnormal{max}}(n)\\
                    &= K^{\ell}dx_{\textnormal{max}}(n).\tag*{\qedhere}
    \end{align*}
\end{proof}

\subsection{FFNN Layers}
\label{sec:ffnn-layers}

\renewcommand{\invec}{\vec{c}}
\renewcommand{\outvec}{\vec{h}}

A FFNN layer computes the same function in $T$ and $\hat{T}$. It may amplify error in the input values, but does not introduce new error.
\begin{lemma}
\label{lem:ffnn_error_bound}
    If for a FFNN layer of $T$ we have two sequences of input vectors $(\invec_1,\ldots,\invec_n)$ and $(\hat{\invec}_1,\ldots,\hat{\invec}_n)$ such that for all $i$, $\norm{\invec_i - \hat{\invec}_i} \le \epsilon$, then for the sequences of output vectors $(\outvec_1,\ldots,\outvec_n)$ of this layer of $T$ and $(\hat{\outvec}_1,\ldots,\hat{\outvec}_n)$ of the same layer of $\hat{T}$, we have for all $i$, $\norm{\outvec_i - \hat{\outvec}_i} \le (d^2p_{\textnormal{max}}^2 + 1) \epsilon$. 
\end{lemma}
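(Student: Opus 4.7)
The plan is to decompose the FFNN sublayer into its three constituent steps and track how the input discrepancy $\norm{\vec{u}_i - \hat{\vec{u}}_i} \le \epsilon$ propagates through each one, applying Lemma~\ref{lem:l1_norm_bound} at the two linear maps and using non-expansiveness at the ReLU step. Since the FFNN computes the same function in $U$ and $\hat{U}$, no new error is introduced; we are purely bounding amplification of the input error.

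First I would write
\[
\vec{v}_i - \hat{\vec{v}}_i = \ffwtwo \bigl( \sigma(\ffwone \vec{u}_i + \ffbone) - \sigma(\ffwone \hat{\vec{u}}_i + \ffbone) \bigr),
\]
noting that the outer bias $\ffbtwo$ cancels and the inner bias $\ffbone$ cancels inside the ReLU (in the sense that adding the same vector to each argument of ReLU does not affect their coordinatewise difference in absolute value). Applying Lemma~\ref{lem:l1_norm_bound} to $\ffwone$, and using $w_{\textnormal{max}} \le p_{\textnormal{max}}$, gives $\norm{\ffwone \vec{u}_i - \ffwone \hat{\vec{u}}_i} \le dp_{\textnormal{max}}\epsilon$.

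Next I would use the fact that $\ReLU$ is $1$-Lipschitz, i.e., $|\ReLU(x) - \ReLU(y)| \le |x - y|$ for every pair of reals $x,y$. Applying this coordinatewise to the vectors $\ffwone \vec{u}_i + \ffbone$ and $\ffwone \hat{\vec{u}}_i + \ffbone$ and summing over coordinates shows that the $\norm{\cdot}$ of the difference after the ReLU remains at most $dp_{\textnormal{max}}\epsilon$. A second application of Lemma~\ref{lem:l1_norm_bound}, this time to $\ffwtwo$, yields the target bound
\[
\norm{\vec{v}_i - \hat{\vec{v}}_i} \le dp_{\textnormal{max}} \cdot dp_{\textnormal{max}} \epsilon = d^2 p_{\textnormal{max}}^2 \epsilon.
\]

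There is no real obstacle; the argument is a routine two-step Lipschitz composition. The only point requiring a moment of care is the ReLU step, where one has to observe that adding an identical bias inside the ReLU does not break the coordinatewise $1$-Lipschitz inequality, so the $\norm{\cdot}$ of the pre-activation difference upper-bounds the $\norm{\cdot}$ of the post-activation difference.
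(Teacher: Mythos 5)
Your proof is correct and follows essentially the same route as the paper's: bound the amplification by the first weight matrix via Lemma~\ref{lem:l1_norm_bound}, note that $\ReLU$ is coordinatewise $1$-Lipschitz so it does not increase the $\ell_1$ error, and apply Lemma~\ref{lem:l1_norm_bound} again for the second weight matrix. Your explicit remark that the bias terms cancel is a small point the paper's proof glosses over, but it does not change the argument.
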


\begin{proof}
Let $\mat{W}_1$, $\vec{b}_1$, $\mat{W}_2$, $\vec{b}_2$ be the parameters of the FFNN.
We have $|\ReLU(x) - \ReLU(y)| \le |x - y|$ for all real numbers $x, y$, so the $\ReLU$ operation does not increase error. Thus it suffices to bound the increase in error from the two affine transformations and the residual connection.

For  $i \in [n]$, $\norm{(\ffwone \invec_i + \vec{b}_1) - (\ffwone\hat{\invec}_i + \vec{b}_1)} = \norm{\ffwone(\invec_i - \hat{\invec}_i)} \le dp_{\textnormal{max}}\epsilon$, by \cref{lem:l1_norm_bound}.

The second affine transformation with parameters $\ffwtwo$ and $\vec{b}_2$ at worst multiplies the error by $dp_{\textnormal{max}}$ again, and the residual connection adds at most the input error, so the final error is bounded by $(d^2p_{\textnormal{max}}^2 +1) \epsilon$.
\end{proof}

\subsection{Self-attention layers}

A self-attention layer may not only amplify error in the input values, but may also introduce new error because of the difference between average-hard attention and temperature scaled softmax attention.
Consider a self-attention layer of $T$ with parameter matrices $\qproj$, $\kproj$, and  $\vproj $ in $\R^{d \times d}$.
We first bound the error in computing attention scores~$s_{i,j}$.
\renewcommand{\invec}{\vec{h}}
\renewcommand{\outvec}{\vec{c}}
\begin{lemma}
\label{lemma:attention-score-error}
Let $(\invec_1,\ldots,\invec_n)$ be a sequence of input vectors to this attention layer for $T$ and $(\hat{\invec}_1,\ldots,\hat{\invec}_n)$ a sequence of input vectors to the same attention layer for $\hat{T}$, where for all $i$,
$\norm{\invec_i - \hat{\invec}_i} \le \epsilon$ for some $\epsilon \le 1$.
There exists a constant $K \ge 1$ depending on $d$ and $p_{\textnormal{max}}$ such that for the attention scores $s_{i,j}$ we have for all $i, j$,
\[|s_{i,j} - \hat{s}_{i,j}| \le 
 K h_{\textnormal{max}} \epsilon\]
where $h_{\textnormal{max}}$ is the maximum of $1$ and the absolute value of any entry in any $\invec_j$.
\end{lemma}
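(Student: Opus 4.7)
The plan is to use the standard ``add and subtract'' decomposition to reduce the problem to two terms each involving only one source of error, then apply $\ell_1$/$\ell_\infty$ Hölder bounds together with \cref{lem:l1_norm_bound} to amplify through the query and key projections. Writing $\qvec{i} = \qproj \vec{u}_i$, $\kvec{j} = \kproj \vec{u}_j$, and $\hat{\qvec{i}}, \hat{\kvec{j}}$ analogously from the $\hat{\vec{u}}$'s, I would begin by noting
\[
s_{i,j} - \hat{s}_{i,j} = \frac{1}{\sqrt{d_{\textrm{k}}}}\bigl[(\qvec{i} - \hat{\qvec{i}})^\top \kvec{j} + \hat{\qvec{i}}^\top (\kvec{j} - \hat{\kvec{j}})\bigr],
\]
so that by the elementary inequality $|\vec{a}^\top \vec{b}| \le \norm{\vec{a}}\,\norm{\vec{b}}$ (which holds for the $\ell_1$ norm since $\|\vec{b}\|_\infty \le \|\vec{b}\|_1$), the error is bounded by
\[
|s_{i,j} - \hat{s}_{i,j}| \le \norm{\qvec{i} - \hat{\qvec{i}}}\,\norm{\kvec{j}} + \norm{\hat{\qvec{i}}}\,\norm{\kvec{j} - \hat{\kvec{j}}}.
\]

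Next I would control each of the four factors using \cref{lem:l1_norm_bound}, which gives $\norm{\mat{W}\vec{u}} \le d p_{\textnormal{max}} \norm{\vec{u}}$ for any parameter matrix $\mat{W}$ of $U$. The two ``error'' factors satisfy $\norm{\qvec{i} - \hat{\qvec{i}}} = \norm{\qproj(\vec{u}_i - \hat{\vec{u}}_i)} \le dp_{\textnormal{max}}\epsilon$, and similarly $\norm{\kvec{j} - \hat{\kvec{j}}} \le dp_{\textnormal{max}}\epsilon$. For the two ``magnitude'' factors, using $\norm{\vec{u}_j} \le d u_{\textnormal{max}}$, I get $\norm{\kvec{j}} \le d^2 p_{\textnormal{max}} u_{\textnormal{max}}$, and since $\norm{\hat{\vec{u}}_i} \le \norm{\vec{u}_i} + \epsilon \le d u_{\textnormal{max}} + 1 \le 2 d u_{\textnormal{max}}$ (using $\epsilon \le 1 \le u_{\textnormal{max}}$), I get $\norm{\hat{\qvec{i}}} \le 2 d^2 p_{\textnormal{max}} u_{\textnormal{max}}$.

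Substituting these four bounds into the decomposition yields
\[
|s_{i,j} - \hat{s}_{i,j}| \le \frac{3 d^3 p_{\textnormal{max}}^2 u_{\textnormal{max}} \epsilon}{\sqrt{d_{\textrm{k}}}} \le 3 d^3 p_{\textnormal{max}}^2 u_{\textnormal{max}} \epsilon,
\]
so that $K := 3 d^3 p_{\textnormal{max}}^2$, which depends only on $d$ and $p_{\textnormal{max}}$, gives the claim. There is no real obstacle here beyond careful bookkeeping of constants; the only subtlety is making sure the bound on $\norm{\hat{\qvec{i}}}$ is expressed purely in terms of $u_{\textnormal{max}}$ rather than something depending on the perturbed input, which is why the assumption $\epsilon \le 1$ (together with $u_{\textnormal{max}} \ge 1$) is invoked to fold the $\epsilon$ term into a constant multiple of $u_{\textnormal{max}}$.
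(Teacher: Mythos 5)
Your proof is correct and follows essentially the same route as the paper: an add-and-subtract decomposition of the bilinear form $\qvec{i}^\top\kvec{j}$, the H\"older-type inequality $|\vec{a}^\top\vec{b}|\le\norm{\vec{a}}\,\norm{\vec{b}}$, and \cref{lem:l1_norm_bound} to control each factor. The only cosmetic difference is that you use a two-term telescoping (absorbing the $\epsilon^2$ contribution into the bound on $\norm{\hat{\qvec{i}}}$ via $\epsilon\le 1$), whereas the paper expands into three terms and discards the resulting $\epsilon^2$ term at the end; both yield a valid constant $K$ depending only on $d$ and $p_{\textnormal{max}}$.
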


\begin{proof} 
The error in the query and key vectors at positions $i$ and $j$, respectively, are, by \cref{lem:l1_norm_bound},
\begin{align*}
\norm{\vec{q}_i - \hat{\vec{q}}_i} &=
\norm{\qproj \vec{h}_i - \qproj \hat{\vec{h}}_i} \le dp_{\textnormal{max}} \epsilon \\
\norm{\vec{k}_j - \hat{\vec{k}}_j} &=
\norm{\kproj \vec{h}_j - \kproj \hat{\vec{h}}_j} \le dp_{\textnormal{max}} \epsilon.
\end{align*}
Then the error in the attention scores is
\begin{align*}
\firstline{\left|s_{i,j} - \hat{s}_{i,j}\right|} \\ &= \frac1{\sqrt d} \left|\vec{q}_i\cdot\vec{k}_j - \hat{\vec{q}}_i \cdot \hat{\vec{k}}_j\right| \\
&= \frac1{\sqrt d} \bigl|\vec{q}_i\cdot\vec{k}_j - \begin{aligned}[t] &(\vec{q}_i + (\hat{\vec{q}}_i-\vec{q}_i)) \\ &{} \cdot (\vec{k}_j + (\hat{\vec{k}}_j-\vec{k}_j))\bigr|\end{aligned} \\
&= \frac1{\sqrt d} \bigl| \begin{aligned}[t] &- \vec{q}_i \cdot (\hat{\vec{k}}_j-\vec{k}_j) - (\hat{\vec{q}}_i - \vec{q}_i) \cdot \vec{k}_j \\ &\quad - (\hat{\vec{q}}_i-\vec{q}_i) \cdot (\hat{\vec{k}}_j-\vec{k}_j) \bigr|\end{aligned},
\intertext{and observing that $|\vec{a}\cdot\vec{b}| \le \norm{\vec{a}} \norm{\vec{b}}$ and $\epsilon \le 1 \le h_{\textnormal{max}}$, we get}
\firstline{\left|s_{i,j} - \hat{s}_{i,j}\right|} \\
&\le \frac1{\sqrt{d}}(2d^2p_{\textnormal{max}}^2 h_{\textnormal{max}} \epsilon + d^2p_{\textnormal{max}}^2 \epsilon^2) \\
&\le 3 d^{3/2} p_{\textnormal{max}}^2 h_{\textnormal{max}} \epsilon.
\tag*{\qedhere}
\end{align*}
\end{proof}
We now turn to bounding the error in the resulting attention weights $\alpha_{i,j}$.
\begin{lemma}
\label{lem:softmax_approximation_bound_tie}
Let $\tau > 0$.
Let $\vec{s} = (s_1, \ldots, s_n)$ be a sequence of scores for $T$ with gap $\gamma$, and let $\hat{\vec{s}} = (\hat{s}_1, \ldots \hat{s}_n)$ be a sequence of scores for $\hat{T}$, respectively, such that for all $i$, $|s_i - \hat{s}_i| \le \epsilon$. Then
\[ \norm{\avghardmax \vec{s} - \softmax_\tau \hat{\vec{s}}} \le 2ne^{-\gamma/\tau} + 4\epsilon/\tau. \]
\end{lemma}

\begin{proof}
There are two sources of error, the softmax and the approximation of $\vec{s}$:
\begin{align*}
\firstline{\norm{\avghardmax \vec{s} - \softmax_\tau \hat{\vec{s}}}} \\
&\le \norm{\avghardmax \vec{s} - \softmax_\tau \vec{s}} \\ &\quad + \norm{\softmax_\tau \vec{s} - \softmax_\tau \hat{\vec{s}}}.
\end{align*}
First, by Lemma B.7 of \citet{edelman2022inductive},
\begin{align*}
\norm{\avghardmax \vec{s} - \softmax \vec{s}} &\le 2ne^{-\gamma/\tau}.
\end{align*}
Second (cf.~Theorem 14f of \citet{chiang:2025}), for all $i \in [n]$,
\begin{align*}
[\softmax_\tau \hat{\vec{s}}]_i &=
\frac{e^{\hat{s}_i/\tau}}{\sum_j e^{\hat{s}_j/\tau}} \\ &\le \frac{e^{(s_i + \epsilon)/\tau}}{\sum_j e^{(s_j - \epsilon)/\tau}} 
= \frac{e^{(s_i + 2\epsilon)/\tau}}{\sum_j e^{s_j/\tau}} \\
&= e^{2\epsilon/\tau} [\softmax_\tau \vec{s}]_i.
\end{align*}
If $\epsilon \le \tau/2$, then
\begin{align*}
[\softmax_\tau \hat{\vec{s}}]_i&\le (1+4\epsilon/\tau) [\softmax_\tau \vec{s}]_i
\intertext{and by similar reasoning,}
[\softmax_\tau \hat{\vec{s}}]_i &\ge (1-4\epsilon/\tau) [\softmax_\tau \vec{s}]_i.
\end{align*}
Then
\begin{align*}
\firstline{\norm{\softmax_\tau \vec{s} - \softmax_\tau \hat{\vec{s}}}} \\
&\le \sum_i 4\epsilon/\tau [\softmax_\tau \vec{s}]_i = 4\epsilon/\tau.
\end{align*}
On the other hand, if $\epsilon > \tau/2$, then $4\epsilon/\tau > 2$, and $\norm{\softmax_\tau \vec{s} - \softmax_\tau \hat{\vec{s}}}$ cannot exceed $2$.
\end{proof}

The next stage of an attention layer averages the vectors $\vproj \invec_j$ 
weighted by the attention weights $\alpha_{i,j}$ and adds the residuals.
The following bounds the error in the result in terms of the errors in the approximations of $\alpha_{i,j}$ and $\invec_j$.
\begin{lemma}
\label{lem:weighted_sum_error_bound}
    Let $(\invec_1,\ldots,\invec_n)$ and $(\hat{\invec}_1,\ldots,\hat{\invec}_n)$ be sequences of elements of $\R^d$ such that $\norm{\invec_i - \hat{\invec}_i} \le \epsilon$ for all $i$.  For each $i$, let $(\alpha_{i,1},\ldots,\alpha_{i,n})$ and $(\hat{\alpha}_{i,1},\ldots,\hat{\alpha}_{i,n})$ be sequences of real numbers in $[0,1]$ that each sum to $1$ such that $\norm{\alpha_{i,*} - \hat{\alpha}_{i,*}} \le \epsilon_1$.
    Let $\outvec_i = \sum_{j=1}^{n} \alpha_{i,j} \vproj \invec_j +\invec_i$ and $\hat{\outvec}_i =\sum_{j=1}^{n} \hat{\alpha}_{i,j} \vproj  \hat{\invec}_j  +\hat{\invec}_i$.
    Then for all $i$,
\[\norm{\outvec_i - \hat{\outvec}_i} \le dp_\textnormal{max}(d h_\textnormal{max} \epsilon_1 + \epsilon) +\epsilon\]
where $h_\textnormal{max}$ is the maximum absolute value of any entry in any $\invec_j$. 
\end{lemma}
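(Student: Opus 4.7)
The plan is a standard ``add-and-subtract'' argument to separate the two sources of error: the error $\epsilon_1$ in the attention weights and the error $\epsilon$ in the input vectors. First I would write
\[
\vec{v}_i - \hat{\vec{v}}_i = \sum_{j=1}^n (\alpha_{i,j} - \hat{\alpha}_{i,j}) \vproj \vec{u}_j + \sum_{j=1}^n \hat{\alpha}_{i,j} \vproj (\vec{u}_j - \hat{\vec{u}}_j),
\]
and then apply the triangle inequality to $\norm{\cdot}$ so that each summand can be bounded in isolation.

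For the first (attention-weight) term, I would pull scalars out of the norm, obtaining
\[
\sum_{j=1}^n |\alpha_{i,j} - \hat{\alpha}_{i,j}| \cdot \norm{\vproj \vec{u}_j}.
\]
Since each $\vec{u}_j \in \R^d$ has entries of absolute value at most $u_\textnormal{max}$, we have $\norm{\vec{u}_j} \le d u_\textnormal{max}$, and then \cref{lem:l1_norm_bound} applied to $\vproj$ gives $\norm{\vproj \vec{u}_j} \le d p_\textnormal{max} \cdot d u_\textnormal{max} = d^2 p_\textnormal{max} u_\textnormal{max}$. Using the hypothesis $\norm{\alpha_{i,*} - \hat{\alpha}_{i,*}} \le \epsilon_1$ then yields a bound of $d^2 p_\textnormal{max} u_\textnormal{max} \epsilon_1$ for this term.

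For the second (input-error) term, I would again use the triangle inequality and \cref{lem:l1_norm_bound}:
\[
\sum_{j=1}^n \hat{\alpha}_{i,j} \norm{\vproj(\vec{u}_j - \hat{\vec{u}}_j)} \le \sum_{j=1}^n \hat{\alpha}_{i,j} \cdot d p_\textnormal{max} \epsilon = d p_\textnormal{max} \epsilon,
\]
using that $\hat{\alpha}_{i,*}$ is a convex combination (nonnegative and summing to $1$). Adding the two bounds gives $d^2 p_\textnormal{max} u_\textnormal{max} \epsilon_1 + d p_\textnormal{max} \epsilon = d p_\textnormal{max}(d u_\textnormal{max} \epsilon_1 + \epsilon)$, as desired. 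There is no substantive obstacle here; the only thing to be careful about is book-keeping the two factors of $d$ (one from $\norm{\vec{u}_j} \le d u_\textnormal{max}$ and one from \cref{lem:l1_norm_bound}) in the first term, versus only a single factor of $d$ in the second term where $\norm{\vec{u}_j - \hat{\vec{u}}_j}$ is bounded directly by $\epsilon$ without passing through $u_\textnormal{max}$.
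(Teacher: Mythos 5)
Your proposal is correct and follows essentially the same argument as the paper: an add-and-subtract decomposition handled by the triangle inequality, with $\norm{\vec{u}_j}\le d u_\textnormal{max}$, $\sum_j|\alpha_{i,j}-\hat{\alpha}_{i,j}|\le\epsilon_1$, $\sum_j\hat{\alpha}_{i,j}=1$, and \cref{lem:l1_norm_bound} for the factor $dp_\textnormal{max}$. The only cosmetic difference is that the paper factors $\vproj$ out of the entire sum first and applies \cref{lem:l1_norm_bound} once at the end, whereas you distribute it into each term; the bounds are identical.
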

\begin{proof}
    Fix position $i$.  Then
    \[\outvec_i - \hat{\outvec}_i = \vproj  \sum_{j=1}^n (\alpha_{i,j}\invec_j - \hat{\alpha}_{i,j}\hat{\invec}_j) + \invec_i - \hat{\invec}_i.\]
    Fix position $j$.  Then
    \begin{align*}
    \firstline{\norm{\alpha_{i,j}\invec_j - \hat{\alpha}_{i,j}\hat{\invec}_j}} \\
        & = \norm{\alpha_{i,j}\invec_j - \hat{\alpha}_{i,j}\invec_j + \hat{\alpha}_{i,j}\invec_j - \hat{\alpha}_{i,j}\hat{\invec}_j} \\
        & \le |\alpha_{i,j} - \hat{\alpha}_{i,j}| \norm{\invec_j} + \hat{\alpha}_{i,j} \norm{\invec_j - \hat{\invec}_j} \\
        & \le |\alpha_{i,j} - \hat{\alpha}_{i,j}| \norm{\invec_j} + \hat{\alpha}_{i,j} \epsilon.
    \end{align*}
    This can be used to show
    \begin{align*}
    \firstline{\norm{\,\sum_{j=1}^n (\alpha_{i,j}\invec_j - \hat{\alpha}_{i,j}\hat{\invec}_j)\,}} \\
      &\le \sum_{j=1}^n \norm{\alpha_{i,j}\invec_j - \hat{\alpha}_{i,j}\hat{\invec}_j} \\
      &\le \sum_{j=1}^n \left(|\alpha_{i,j} - \hat{\alpha}_{i,j}| \norm{\invec_j} + \hat{\alpha}_{i,j} \epsilon\right) \\
      &\le d h_\textnormal{max} \epsilon_1 + \epsilon.
    \end{align*}  
    Finally, the multiplication by $\vproj $ multiplies this bound by at most $dp_\textnormal{max}$, and the residual connection adds at most another $\epsilon$.
\end{proof}

\subsection{Proof of \cref{lem:one-layer-error-tie}}
\label{sect:proof-lemma-one-layer-error}

\renewcommand{\invec}{\vec{g}}
\renewcommand{\outvec}{\vec{h}}

\begin{proof}
Let $\invec_i$, $\hat{\invec}_i$, $\outvec_i$, and $\hat{\outvec}_i$ for $i \in [n]$ be as in the statement of the lemma.
Assume that for all~$i \in [n]$, $\norm{\invec_i - \hat{\invec}_i} \le \epsilon \le 1$.

By \cref{lemma:activation-bound,lemma:attention-score-error},
there exist constants $K_1, K_2 \ge 1$ (not depending on $\ell$) such that for $i,j \in [n]$,
\begin{align*}
|s_{i,j} - \hat{s}_{i,j}| &\le K_1 h_{\textnormal{max}}(n) \epsilon
\\
&\le K_2 x_{\textnormal{max}}(n) \epsilon.
\end{align*}
By \cref{lem:softmax_approximation_bound_tie}, 
for all $i \in [n]$,
\[ \norm{\alpha_{i,*} - \hat{\alpha}_{i,*}} \le 2ne^{-\frac{\gamma(n)}{\tau(n)}} + \tfrac{4K_2x_{\textnormal{max}}(n)\epsilon}{\tau(n)}. \]
By \cref{lemma:activation-bound,lem:weighted_sum_error_bound}, 
there exist constants $K_3, K_4, K_5 \ge 1$ (not depending on $\ell$) such that for all $i \in [n]$,
\begin{align*}
\firstline{\norm{\vec{c}_i - \hat{\vec{c}}_i}} \\ &\le K_3 x_{\textnormal{max}} \left( ne^{-\frac{\gamma(n)}{\tau(n)}} + \tfrac{x_{\textnormal{max}}(n)\epsilon}{\tau(n)}\right) + K_4 h_{\textnormal{max}} \\
&\le K_5 x_{\textnormal{max}} \left( ne^{-\frac{\gamma(n)}{\tau(n)}} + \tfrac{x_{\textnormal{max}}(n)\epsilon}{\tau(n)}\right).
\end{align*}
Finally, by \cref{lem:ffnn_error_bound}, there exists a constant $K \ge 1$ (not depending on $\ell$) such that
\begin{align*}
\norm{\outvec_i - \hat{\outvec}_i} \le K x_{\textnormal{max}}(n) \left( n e^{-\frac{\gamma(n)}{\tau(n)}} + \tfrac{x_{\textnormal{max}}(n) }{\tau(n)} \epsilon \right). \tag*{\qedhere}
\end{align*}    
\end{proof}

\subsection{Proof of \cref{thm:smat_approx_hard_attention}}
\label{sec:smat_approx_hard_attention_proof}

\begin{proof}

Fix an input length $n$.
For brevity, we write $\min(\gamma(n),1)$, $\max(x_{\textnormal{max}}(n),1)$, and $\min(\tau(n),1)$ as $\gamma,x_{\textnormal{max}}$, and $\tau$, respectively.

Fix an input sequence of vectors $(\vec{x}_1,\ldots,\vec{x}_n)$. 
Let us write the bound from \cref{lem:one-layer-error-tie} as
\begin{align*}
\epsilon_\ell &= \norm{\vec{h}_i - \hat{\vec{h}}_i} \le a + r \epsilon_{\ell-1} \\
a &= Kx_{\textnormal{max}} n e^{\gamma/\tau} \\
r &= Kx_{\textnormal{max}}^2/\tau.
\end{align*}
Since the error $\epsilon_\ell$ at each layer is bounded by an affine function of the error at the layer below, the final error is the sum of a geometric series:
\begin{align*}
\epsilon_L = \norm{\vec{y}_i - \hat{\vec{y}}_i} &= a + ar + \cdots + ar^{L-1} \\
&\le Lar^{L-1} \\
&= L K nx_{\textnormal{max}} e^{-\gamma/\tau} \left(\frac{Kx_{\textnormal{max}}^2}{\tau}\right)^{L-1}.
\end{align*}

Taking logs of both sides and using the inequality $\log r \le \log r_0 + \frac{1}{r_0}(r-r_0)$ based on the first-order Taylor approximation, with $r$ as above
and $r_0 = 2K(L-1)x_{\textnormal{max}}^2/\gamma$, we get

\begin{align*}
\firstline{\log \epsilon_L = \log \norm{\vec{y}_i - \hat{\vec{y}}_i}} \\
& \le \log Knx_{\textnormal{max}} - \frac{\gamma}{\tau} + (L-1) \log
r
\\
& \le \log Knx_{\textnormal{max}} - \frac{\gamma}{\tau} + (L-1) \left(\log r_0 + \frac{r-r_0}{r_0}\right) \\
&\le \log Knx_{\textnormal{max}} - \frac{\gamma}{2\tau} +
(L-1) (\log r_0-1).
\end{align*}
We want $\epsilon_L$ to be bounded by the given error function $\epsilon(n)$.
If $\alpha$ and $\beta$ are such that $\epsilon(n) \le \alpha/n^\beta$, then we need 
\begin{align*}
    \firstline{\log Knx_{\textnormal{max}} - \frac{\gamma}{2\tau} +
(L-1) (\log r_0-1)} \\ &\leq \log \alpha/n^\beta
\end{align*}
which is obtained with the temperature bound
\begin{align*}
\frac1\tau &\ge \frac2\gamma \left(\log \frac{K n^{\beta+1}x_{\textnormal{max}}}{\alpha} + (L-1)
(\log r_0 - 1)
\right) \\
&\in O\left(\frac1\gamma \log \frac{n x_{\textnormal{max}}}\gamma \right). \tag*{\qedhere}
\end{align*}
\end{proof}

\section{$\SRASP$ Program for \prob{Dyck}-$k$}
\label{sec:S-RASP-for-Dyck-k}
The following $\SRASP$ program \cite{strobl2024transformers} returns, for each prefix of the input ($\inputx$), whether the prefix belongs to \prob{Dyck}-$k$, the language of balanced strings of brackets with $k$ types of brackets.
We assume that $\vecname{left}(\sigma)$ returns $1$ if $\sigma$ is a left bracket and $0$ otherwise, $\vecname{right}(\sigma)$ returns $1$ if $\sigma$ is a right bracket and $0$ otherwise, and $\vecname{mismatch}(\sigma,\tau)$ returns $\true$ if $\sigma$ and $\tau$ are not left and right brackets of the same type.  

\begin{raspcode}
    \begin{align*}
        \vecname{sleft}(i) &= \attsum{j}{j \le i}{\vecname{left}(\inputx(i))}\\
         \vecname{sright}(i) &= \attsum{j}{j \le i}{\vecname{right}(\inputx(i))}\\
         \vecname{er1}(i) &= \vecname{sright}(i) > \vecname{sleft}(i) \\
         \vecname{diff}(i) &= \vecname{sleft}(i) - \vecname{sright}(i) \\
         \vecname{d}(i) &= \vecname{diff}(i) + \vecname{right}(\inputx(i))\\
         \vecname{check}(i) &= \attrdefault{j}{j < i}{\vecname{d}(j) = \vecname{d}(i)} {\inputx(j)}{\rechar{?}} \\
         \vecname{er2}(i) &= (\vecname{right}(\inputx(i)) = 1) \land \\
         & \qquad \vecname{mismatch}(\vecname{check}(i),\inputx(i))\\
         \vecname{okprefix}(i) &= \attrdefault{j}{j \le i}{\vecname{er1}(j) \lor \vecname{er2}(j)}{\false}{\true}\\
         \outputy(i) &= \vecname{okprefix}(i) \land (\vecname{diff}(i) = 0)
    \end{align*}
\end{raspcode}

$\vecname{sleft}(i)$ and $\vecname{sright}(i)$ are the numbers of left and right (resp.)~brackets of any type through position $i$. An error ($\vecname{er1}(i)$) is recorded wherever there are more right than left brackets through position $i$. $\vecname{d}(i)$ is the depth of the bracket at~$i$. $\vecname{check}(i)$  records the nearest bracket strictly to the left that has the same depth. An error ($\vecname{er2}(i)$) is recorded wherever there is a right bracket and $\vecname{check}(i)$ is not a left bracket of the same type. Finally, the output $\outputy(i)$ for $i$ is $\true$ iff the prefix through $i$ has no errors ($\vecname{okprefix}(i)$) and contains an equal number of left and right brackets.

\clearpage

\end{document}